\documentclass{article}

\usepackage{microtype}
\usepackage{comment}
\usepackage{pifont}
\usepackage{graphicx}
\usepackage{multirow}
\usepackage{multicol}
\usepackage{subcaption}
\usepackage{booktabs} % for professional tables

\usepackage{hyperref}

\usepackage[accepted]{icml2026}

\usepackage{amsmath}
\usepackage{amssymb}
\usepackage{mathtools}
\usepackage[ruled,vlined,noend, algo2e]{algorithm2e}
\usepackage{amsthm}
\usepackage[table]{xcolor}
\usepackage{colortbl}
\definecolor{rowgray}{gray}{0.93}

\usepackage[capitalize,noabbrev]{cleveref}

\theoremstyle{plain}
\newtheorem{theorem}{Theorem}[section]
\newtheorem{proposition}[theorem]{Proposition}

\newtheorem{corollary}[theorem]{Corollary}
\theoremstyle{definition}

\theoremstyle{remark}
\newtheorem{remark}[theorem]{Remark}

\usepackage[textsize=tiny]{todonotes}

\icmltitlerunning{DisjunctiveNet: Neural Symbolic Learning via Differentiable Convexified Optimization Layers}

\begin{document}

\twocolumn[
  \icmltitle{DisjunctiveNet: Neural Symbolic Learning via Differentiable Convexified Optimization Layers}

  % It is OKAY to include author information, even for blind submissions: the
  % style file will automatically remove it for you unless you've provided
  % the [accepted] option to the icml2026 package.

  % List of affiliations: The first argument should be a (short) identifier you
  % will use later to specify author affiliations Academic affiliations
  % should list Department, University, City, Region, Country Industry
  % affiliations should list Company, City, Region, Country

  % You can specify symbols, otherwise they are numbered in order. Ideally, you
  % should not use this facility. Affiliations will be numbered in order of
  % appearance and this is the preferred way.
  \icmlsetsymbol{equal}{*}

  \begin{icmlauthorlist}
    \icmlauthor{Shraman Pal}{equal,yyy}
    \icmlauthor{Can Li}{equal,yyy}
    %\icmlauthor{}{sch}
    %\icmlauthor{}{sch}
  \end{icmlauthorlist}

  \icmlaffiliation{yyy}{Davidson School of Chemical Engineering, Purdue University, West Lafayette, USA}
  \icmlcorrespondingauthor{Shraman Pal}{pal52@purdue.edu}
  \icmlcorrespondingauthor{Can Li}{canli@purdue.edu}

  % You may provide any keywords that you find helpful for describing your
  % paper; these are used to populate the "keywords" metadata in the PDF but
  % will not be shown in the document
  \icmlkeywords{Machine Learning, ICML}

  \vskip 0.3in
]

% this must go after the closing bracket ] following \twocolumn[ ...

% This command actually creates the footnote in the first column listing the
% affiliations and the copyright notice. The command takes one argument, which
% is text to display at the start of the footnote. The \icmlEqualContribution
% command is standard text for equal contribution. Remove it (just {}) if you
% do not need this facility.

% Use ONE of the following lines. DO NOT remove the command.
% If you have no special notice, KEEP empty braces:
\printAffiliationsAndNotice{}  % no special notice (required even if empty)
% Or, if applicable, use the standard equal contribution text:
% \printAffiliationsAndNotice{\icmlEqualContribution}

\begin{abstract}

Many learning tasks in science and engineering are characterized by sparse datasets, which limits the effectiveness of purely data-driven approaches. At the same time, these problems are often accompanied by rich domain knowledge derived from physical laws, operational requirements, and expert heuristics. Such knowledge is frequently expressed as rules involving logical propositions and linear inequalities. Existing neuro-symbolic methods typically enforce these rules approximately through soft penalties, assume input-independent rules when designing specialized architectures, or rely on non-differentiable post-processing at inference time to achieve hard constraint satisfaction. While recent advances in differentiable optimization layers enable end-to-end feasibility enforcement within neural networks, extending these approaches to logical or mixed-integer rules remains challenging due to inherent nonconvexity. In this work, we propose a unified end-to-end framework for enforcing hard, input-dependent mixed integer linear constraints within neural networks. Our approach represents rules as disjunctive constraints and applies hierarchical convex relaxations to obtain convex hull formulations. These relaxations yield tractable linear constraints that can be embedded as differentiable optimization layers while enabling exact rule satisfaction. We demonstrate the effectiveness of the proposed framework on real-world datasets, achieving perfect rule satisfaction and strong predictive performance.

\end{abstract}

\section{Introduction}

Deep learning has achieved remarkable performance in applications such as natural language processing \citep{vaswani2023attentionneed, devlin-etal-2019-bert}, computer vision \citep{he2015deepresiduallearningimage}. Similar learning-based approaches are increasingly influential in scientific and engineering workflows, where models must capture complex physical processes \cite{karniadakis2021physics,chen2019neuralordinarydifferentialequations} or high-dimensional biological structure \citep{Jumper2021}. However, many real-world scientific and engineering applications operate in regimes with limited labeled data and frequent distribution shifts. In these settings, purely data-driven deep learning models can make predictions that violate known structural, physical, or safety constraints, undermining the reliability of such systems. At the same time, these domains often provide rich prior knowledge-physical laws, experimental protocols, and expert heuristics, that generally remain valid across such distributional shifts.

A broad literature on neuro-symbolic learning incorporates domain knowledge via \emph{soft} penalties or differentiable surrogates for logic rules \citep{hu2020harnessingdeepneuralnetworks,xu2018semantic,wang2019satnetbridgingdeeplearning, manhaeve2018deepproblog,Giunchiglia_2022}. While effective in practice, soft enforcement does not guarantee feasibility and penalty coefficients can be costly to tune. In parallel, differentiable optimization layers provide \emph{hard} enforcement for many continuous and convex constraint families \citep{Amos2017,agrawal2019differentiable}. Yet extending these layers to \emph{logical} rules is challenging: operators such as implication and disjunction naturally induce nonconvex, disjoint feasible regions, and many scientific rules are \emph{input-dependent} (``if-then''), activating only in specific regimes. 

This paper introduces a framework for enforcing \emph{input-dependent logical rules} over neural network (NN) outputs, where each rule is represented as a finite disjunction of linear constraints, corresponding to unions of polyhedra. We show that this constraint representation is as expressive as mixed-integer linear programming (MILP) constraints used by the optimization community and the Quantifier-Free Linear Real Arithmetic (QF-LRA) used by the neural symbolic community. To guarantee exact constraint satisfaction while retaining end-to-end differentiability, we adopt the basic step hierarchy from the disjunctive programming literature \cite{balas2018disjunctive} to sequentially convexify the disjunctive constraints, from a conjunctive normal form (CNF) to a disjunctive normal form (DNF). We show that the convex hull reformulation of the DNF admits a differentiable linear programming (LP) representation while guaranteeing satisfaction of the original logical constraints. We evaluate the proposed approach on synthetic control tasks and a real-world single-cell RNA sequencing (scRNA-seq) classification benchmark, where domain knowledge naturally manifests as input-dependent logical rules. Across all settings, enforcing logical structure provides a strong inductive bias in data-scarce regimes and substantially improves rule satisfaction at inference time.

% Our contributions are:
% \begin{itemize}
%     \item A general formulation of input-dependent logical rules as disjunctive feasibility sets over continuous neural network outputs.
%     \item Differentiable projection layers based on convex-hull relaxations, including scalable constructions that trade off computational cost and constraint tightness.
%     \item A unified framework that accommodates logical rules and global convex constraints, validated on synthetic and scientific benchmarks.
% \end{itemize}

\section{Related Work}

\paragraph{Continuous constraints}
For continuous constraints, most learning-based approaches fall into soft or hard enforcement categories. Soft methods encourage feasibility through penalty or augmented Lagrangian objectives and primal-dual style updates \cite{fioretto2020lagrangiandualityconstraineddeep, park2022selfsupervisedprimalduallearningconstrained}, as well as differentiable correction schemes such as unrolled gradient iterations or constraint-based completion \cite{Donti2021}. These approaches generally do not provide feasibility guarantees at either training or inference time. In contrast, hard methods enforce feasibility by design through differentiable optimization or projection layers \cite{Amos2017,agrawal2019differentiable}, including specialized constructions for linear matrix inequalities \cite{Tang2026LMINetLM}. Recent work has also proposed tailored projection-based methods relying on closed-form solutions for special cases like linear equalities \cite{Chen2024}, affine equality and inequality \cite{min2024hardnet}, gauge functions for convex constraints \cite{ constanteflores2025enforcinghardlinearconstraints, Tabas2022b, Liang2024, Tordesillas2023}, inner approximations of the feasible region \cite{Frerix2020, Zheng2021}, and iterative projection procedures \cite{lastrucci2025enforce, iftakher2025physics,nguyen2025fsnet}. These methods primarily address continuous constraints rather than general logical or discrete structure.

\paragraph{Logical constraints}
An orthogonal line of work in neuro-symbolic AI incorporates logical or symbolic rules in NNs \cite{Giunchiglia_2022}. A common approach translates logical constraints into differentiable penalties that bias training toward rule satisfaction \cite{xu2018semantic, hu2020harnessingdeepneuralnetworks, pmlr-v97-fischer19a, Badreddine_2022, manhaeve2018deepproblog}. These methods are flexible and largely domain-agnostic, but they do not guarantee exact rule satisfaction. To address hard constraints, MultiplexNet \cite{hoernle2022multiplexnet} modifies the neural network architecture to embed disjunctive constraints and uses variational inference to train the model such that at least one disjunction is satisfied. However, MultiplexNet can only embed global constraints and cannot represent constraints that depend on the input to the neural network. 

A major challenge in enforcing general MILP constraints is that the optimal solution of the associated optimization problem is typically non-differentiable with respect to the training parameters. As a result, gradients cannot be directly backpropagated to train the neural network. A simple workaround is to enforce feasibility only at inference time by solving an ILP constrained decoding problem on top of neural scores \cite{roth2005integer}. More recent work aims to make discrete optimization modules differentiable. The Straight-through Estimator (STE) was initially proposed to differentiate the thresholding function by treating it as an identity function and was recently extended to combinatorial optimization problems with input-independent constraint set \cite{sahoo2023backpropagation}.   One line of work considers convex relaxations of combinatorial optimization problems. For example, linear programming (LP) relaxations with regularization have been explored in \cite{wilder2019melding, Ferber_Wilder_Dilkina_Tambe_2020, mckenzie2024differentiating}, while semidefinite programming (SDP) relaxations are used in SATNet \cite{wang2019satnetbridgingdeeplearning}. However, these approaches cannot guarantee exact satisfaction of hard constraints. Related verification-guided work uses reachability analysisto derive differentiable safety losses for ReLU networks with non-convex input and unsafe sets \cite{11312423}. Another line of work derives smooth surrogate gradients for the argmax operator using stochastic perturbations \cite{berthet2020learning}. While effective in certain settings, these methods require repeatedly solving combinatorial optimization problems to estimate one gradient, which can be computationally expensive. Moreover, they typically assume a fixed constraint set and do not support constraints that depend on the input. Disjunctive Refinement Layer (DRL) \cite{stoian2025beyond} guarantees satisfaction of quantifier-free linear constraints by sequentially clamping and eliminating variables to their nearest bound, in a fixed order, similar to Fourier-Motzkin elimination \cite{R-366-PR}. The resulting projected point is optimal in the pre-defined variable order and its offline variable-elimination procedure can generate a combinatorial number of derived constraints in the worst case. In contrast, our framework can jointly minimize the projection distance over the complete variable set and a large number of linear constraints in a disjunct.

A detailed comparison between existing approaches and our method is provided in Appendix~\ref{app:litreview}. Our work falls within the class of approaches that use convex relaxations to represent mixed-integer linear constraints. The key distinction from prior work is that the relaxation we derive corresponds to the convex hull of the original constraint set, that is, the tightest possible convex relaxation. Owing to this tightness result, we can formally prove that the constraints are satisfied exactly. In contrast, existing methods such as \cite{wilder2019melding, Ferber_Wilder_Dilkina_Tambe_2020, mckenzie2024differentiating} employ convex relaxations primarily as heuristics, and constraint satisfaction is not guaranteed in general.  In summary, our contribution is an optimization-based framework that jointly minimizes the projection distance to hard, input-dependent mixed-integer linear constraint set.

\section{Method}
Let $x \in \mathcal{X}$ denote an input. A neural network $f_\theta : \mathcal{X} \to \mathcal{Y}$ produces an unconstrained prediction $\hat{y} = f_\theta(x)$, where $\mathcal{Y} \subseteq \mathbb{R}^d$ denotes the output space, which may include both continuous and discrete components. The goal of the proposed method is to enforce constraints on the prediction by projecting the unconstrained output $\hat{y}$ onto a feasible set defined by several constraints. Section~\ref{sec:defandexpre} defines the class of constraints that the method can enforce and shows that they are as expressive as MILP and QF-LRA. Section~\ref{sec:eqconvex} describes how the constraint set can be convexified to construct an LP-representable differentiable optimization layer.

\subsection{Definition and expressiveness of the rules}
\label{sec:defandexpre}
\paragraph{Input-dependent rules}
We model domain knowledge using a collection of $R$ rules, each expressed as an implication
\begin{equation}
\mathbb{I}[x \in \mathcal{A}_r]
\;\Rightarrow\;
\mathbb{I}[y \in \mathcal{C}_r(x)],
\qquad r = 1,\dots,R,
\end{equation}
where $\mathcal{A}_r \subseteq \mathcal{X}$ is the \emph{activation set} of rule $r$, and $\mathcal{C}_r(x) \subseteq \mathcal{Y}$ is the corresponding \emph{output feasibility set}. That is, whenever the input $x$ lies in $\mathcal{A}_r$, the output $y$ is required to belong to $\mathcal{C}_r(x)$. The total number of rules is denoted by $R$, and different subsets of rules may be active for different inputs.

Let
\[
\mathcal{R}(x) := \{ r \in \{1,\dots,R\} \mid x \in \mathcal{A}_r \}
\]
denote the set of rules activated by a given input $x$. The rule-induced feasible set for input $x$ is then given by the intersection of the corresponding output feasibility sets:
\begin{equation}
\mathcal{F}(x)
\;=\;
\bigcap_{r \in \mathcal{R}(x)} \mathcal{C}_r(x).
\label{eq:rule_feasible_set}
\end{equation}

This formulation captures domain knowledge expressed as multiple input-dependent if-then rules, where feasibility of the output depends explicitly on the input.

\paragraph{Expressiveness of $\mathcal{A}_r$ and $\mathcal{C}_r(x)$}
We assume that each activation set $\mathcal{A}_r$ is a compact subset of the input space $\mathcal{X}$, such as a polyhedral or ellipsoidal set, and that membership of $x$ in $\mathcal{A}_r$ can be efficiently verified.

We further assume that each output feasibility set $\mathcal{C}_r(x)$ can be represented as a finite union of polyhedral sets, given by
\begin{equation} \mathcal{C}_r(x) = \bigcup_{j=1}^{m_r} \mathcal{S}_{rj}(x), \;\;\; \mathcal{S}_{rj}(x) = \{ y : A_{rj}(x) y \le b_{rj}(x) \}. \label{eq:union_polyhedra} \end{equation}
where $m_r$ denotes the number of polyhedral sets associated with rule $r$, and $A_{rj}(x)$ and $b_{rj}(x)$ define the constraint matrix and right-hand side vector, respectively. Both $A_{rj}(x)$ and $b_{rj}(x)$ may depend on the input $x$, allowing the feasible set to vary with the input. Each union of polyhedra, $\mathcal{C}_r(x)$, is called a \emph{disjunction}. Each polyhedron within the union, $\mathcal{S}_{rj}(x)$, is called a \emph{disjunct}.  

Therefore, the set $\mathcal{F}(x)$ can represent the intersection of unions of polyhedra. The following proposition shows that $\mathcal{F}(x)$ can be as expressive as MILP.
\begin{proposition}[MILP and equivalent disjunctive constraint]\label{prop:MILPequivalent}
Under mild boundedness assumptions, any MILP can be represented equivalently as the intersection of unions of polyhedra and vice versa.
\end{proposition}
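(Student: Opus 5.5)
We prove the two directions separately, interpreting ``represent'' as equality of the projections onto the continuous output variables $y$, possibly after adding auxiliary variables. The mild boundedness assumption will be that the integer variables of the MILP range over a bounded set, and that each polyhedron in a disjunction is bounded, or at least that all disjuncts share a common recession cone. These are exactly the conditions under which Jeroslow's characterization of MILP-representable sets applies.

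\emph{MILP $\Rightarrow$ intersection of unions of polyhedra.} Consider an MILP feasible set $\{(y,z): Ay + Bz \le b,\ z \in \mathbb{Z}^p\}$ whose integer part is bounded, so $z$ ranges over a finite set $Z$. We write the set as a single disjunction
\[
\bigcup_{\bar z \in Z} \{(y,z) : Ay \le b - B\bar z,\ z = \bar z\}.
\]
This is a union of finitely many polyhedra, so it is a special case of $\mathcal{F}(x)$ with one rule and $m_1 = |Z|$. Its projection onto $y$ is $\bigcup_{\bar z}\{y : Ay \le b - B\bar z\}$, which is again a finite union of polyhedra. Input dependence causes no difficulty: if $A,B,b$ depend on $x$, then each disjunct inherits that dependence, matching $A_{rj}(x),b_{rj}(x)$. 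To avoid an exponential blow-up, we can instead take one two-term disjunction per binary digit, $[z_k=0]\vee[z_k=1]$, intersected with the linear constraints. This gives a conjunction of unions with the same feasible set and illustrates why intersections of unions are the natural form.

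\emph{Intersection of unions $\Rightarrow$ MILP.} For each rule $r$ and disjunct $j$, we introduce a binary variable $\lambda_{rj}$ with $\sum_j \lambda_{rj} = 1$. We then use the standard extended (Balas) formulation:
\[
y = \sum_j v_{rj},\qquad A_{rj} v_{rj} \le \lambda_{rj} b_{rj},\qquad \lambda_{rj}\in\{0,1\}.
\]
Alternatively, under boundedness we can use big-M constraints $A_{rj} y \le b_{rj} + M_{rj}(1-\lambda_{rj})$, where $M_{rj}$ is finite because $y$ ranges over a bounded set. Checking correctness amounts to the following. If $\lambda_{rj}=1$, then $v_{rj}=y$ must lie in $\mathcal{S}_{rj}$. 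If $\lambda_{rk}=0$, then $A_{rk}v_{rk}\le 0$ forces $v_{rk}=0$ when $\mathcal{S}_{rk}$ is bounded, since the recession cone is trivial. Hence the projection onto $y$ equals $\mathcal{C}_r$. Taking the conjunction over $r\in\mathcal{R}(x)$ gives an MILP whose $y$-projection is exactly $\mathcal{F}(x)$.

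\emph{Main obstacle.} The delicate step is the role of boundedness. Without it, the extended formulation can admit $v_{rk}$ in the recession cone of an ``inactive'' disjunct, so the projection becomes the union plus recession directions. Likewise, the MILP direction fails when the integer set is infinite. I would handle this by stating the assumption precisely: either every disjunct is bounded or all disjuncts share a common recession cone, and the integer variables are bounded. I would cite Jeroslow's theorem, which says MILP-representable sets are exactly finite unions of polyhedra with a common recession cone, to justify that this assumption is essentially necessary.
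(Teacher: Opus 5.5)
Your proof is correct. It reaches the statement by a different and more complete route than the paper's.

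\textbf{What the paper does.} The appendix proof covers only the direction from $\mathcal{F}(x)$ to an MILP. It assumes $y$ lies in a known box $\mathcal{B}$ and introduces binary selectors $\delta_{rj}$ with $\sum_j\delta_{rj}=1$ for each active rule. Each disjunct is written with big-$M$ rows $(A_{rj}y)_i\le (b_{rj})_i+M_{rji}(1-\delta_{rj})$, where $M_{rji}$ is valid over $\mathcal{B}$, and all rules share the same $y$. This is exactly the big-$M$ alternative you mention.

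\textbf{What you add.} The ``MILP $\Rightarrow$ intersection of unions'' direction is asserted in the statement but never argued in the paper. Your enumeration over the finite set $Z$ of integer assignments fills that gap, as does your per-variable disjunction version in the lifted $(y,z)$ space. Your appeal to Jeroslow's characterization also explains why boundedness or a common recession cone is the right ``mild'' assumption.

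\textbf{Hull versus big-$M$.} For the converse, your main tool is the Balas hull formulation with binary $\lambda_{rj}$. It needs no $M$ constants. It works under the weaker common-recession-cone hypothesis rather than an explicit box on $y$. Relaxing $\lambda$ to be continuous gives exactly the extended formulation of Proposition~\ref{prop:conv_hull_union}, which the rest of the paper relies on. The cost is one copy $v_{rj}$ of $y$ per disjunct. Big-$M$ is leaner but has a weaker LP relaxation and needs explicit bounds.

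\textbf{One step to repair.} You claim that ``$A_{rk}v_{rk}\le 0$ forces $v_{rk}=0$ because the recession cone is trivial.'' This holds only for \emph{nonempty} $\mathcal{S}_{rk}$, because $\{v: A_{rk}v\le 0\}$ is the recession cone only of a nonempty polyhedron. An empty disjunct is vacuously bounded, yet its homogenization can be nontrivial. For example, $\{y\in\mathbb{R}^2: y_1\le 0,\ -y_1\le -1\}$ is empty, but its homogenization is $\{v: v_1=0\}$. The inactive copy would then let $y$ shift freely in the $y_2$ direction, so the projection would no longer equal $\mathcal{C}_r$. Two fixes work: discard empty disjuncts first, as the paper does with $\mathcal{K}^\star$ in the proof of Theorem~\ref{thm:dnf_exact_app}, or include the box rows in every disjunct's description. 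Your big-$M$ alternative is unaffected by this issue.
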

A proof can be found in Appendix \ref{app:MILPequiv}.

The next theorem shows that $\mathcal{F}(x)$ can also express any Quantifier-Free Linear Real Arithmetic (QF-LRA). 
\begin{theorem}[Expressiveness of rule-induced feasible sets $\mathcal{F}(x)$ (informal)]
\label{thm:qf_informal}
For a fixed input $x$, consider any constraint on the output $y \in \mathbb{R}^d$
that can be written using linear inequalities in $y$ and logical operators
such as \emph{and}, \emph{or}, and \emph{not}, where the coefficients of the
inequalities may depend on $x$.
Then the set of all outputs satisfying these constraints can be represented as
a finite union of polyhedral sets in $y$.
\end{theorem}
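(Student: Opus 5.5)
We prove the statement by structural induction on the formula, after first normalizing it. Fix the input $x$; every atomic constraint then has the form $a(x)^\top y \le \beta(x)$ or $a(x)^\top y < \beta(x)$, possibly combined with equalities, and the coefficients are just fixed reals. The plan has three steps: push all negations down to the atoms, bring the resulting negation-free formula to disjunctive normal form, and read each DNF term as a polyhedron.

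\emph{Step 1 (negation normal form).} Apply De Morgan's laws, $\neg(\phi\wedge\psi)\equiv\neg\phi\vee\neg\psi$ and $\neg(\phi\vee\psi)\equiv\neg\phi\wedge\neg\psi$, together with double-negation elimination. Other connectives such as implication are first rewritten as $\phi\Rightarrow\psi\equiv\neg\phi\vee\psi$. After this, negation appears only directly on atoms. A negated atom is again linear:
\begin{itemize}
\item $\neg(a^\top y\le\beta)$ becomes $a^\top y>\beta$;
\item $\neg(a^\top y=\beta)$ becomes $(a^\top y<\beta)\vee(a^\top y>\beta)$.
\end{itemize}
So the formula is now built from linear atoms using only $\wedge$ and $\vee$.

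\emph{Step 2 (DNF by induction).} Show that every negation-free formula over linear atoms defines a set of the form $\bigcup_{j=1}^m S_j$, where each $S_j$ is a finite intersection of linear (in)equalities.
\begin{itemize}
\item \emph{Atoms:} the claim holds with $m=1$.
\item \emph{Disjunction:} the union of two such representations is again one, with $m_1+m_2$ pieces.
\item \emph{Conjunction:} distributivity gives $\bigl(\bigcup_i S_i\bigr)\cap\bigl(\bigcup_k T_k\bigr)=\bigcup_{i,k}(S_i\cap T_k)$. This is a finite union of $m_1m_2$ pieces, each a finite intersection of linear constraints.
\end{itemize}
The number of pieces may grow exponentially, but it stays finite, which is all the theorem requires.

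\emph{Step 3 (strict inequalities) and the main obstacle.} Each piece $S_j$ is a conjunction of linear constraints, but strict inequalities make it a relatively open polyhedron rather than a closed set $\{y: A_j y\le b_j\}$ as in \eqref{eq:union_polyhedra}. This is the only genuinely delicate point, and I would handle it in one of two ways.
\begin{itemize}
\item Interpret ``polyhedral'' to include sets defined by finitely many non-strict and strict linear inequalities. The statement is then exact.
\item Alternatively, following the convention implicit in Proposition~\ref{prop:MILPequivalent} and the informal wording, replace each strict inequality $a^\top y<\beta$ by $a^\top y\le\beta-\epsilon$ for a fixed tolerance $\epsilon>0$. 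This is standard in MILP and neuro-symbolic practice and gives closed polyhedra.
\end{itemize}
I would state the result under the first interpretation and note the $\epsilon$-tightening as the closed variant. Finally, dependence on $x$ causes no difficulty: $x$ is fixed throughout, and the construction in Steps 1–2 depends only on the syntactic form of the formula. Hence the number of disjuncts is independent of $x$, and the matrices $A_j(x)$ and vectors $b_j(x)$ are obtained by stacking the atom coefficients, exactly matching \eqref{eq:union_polyhedra}.
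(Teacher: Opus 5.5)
Your proof follows the same route as the paper's: push negations down to the atoms, distribute conjunctions over disjunctions to reach a disjunctive normal form, and read each conjunctive term as a polyhedron. Your handling of negated atoms is more careful than the paper's, which rewrites $\neg(a^\top y\le b)$ as $a^\top y\ge b$ and so silently replaces a strict inequality by its closure. You are right that an exact representation must allow strict inequalities: a finite union of closed polyhedra is closed, so a set such as $\{y: a^\top y> b\}$ with $a\neq 0$ cannot be obtained any other way. As a consequence, your closing claim of matching the closed form $\{y: A_j(x)y\le b_j(x)\}$ exactly holds only for your $\epsilon$-tightened variant, and that variant is an inner approximation rather than an equality.
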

The formal version of the theorem and the corresponding proof are shown in Appendix \ref{app:qflraequiv}. 

These expressiveness results show that the proposed framework unifies two complementary notations on constraint enforcement.
Logical rules expressed using linear inequalities and Boolean operators, as studied in neural-symbolic AI, are covered by Theorem~\ref{thm:qf_informal}, while optimization-based constraint enforcement via MILPs, common in differentiable optimization layers, is captured by Proposition~\ref{prop:MILPequivalent}.

\subsection{Differentiable convexified optimization layer}\label{sec:eqconvex}
\paragraph{Projection layer.}
Given an input $x\in\mathcal{X}$ and the rule-feasible set $\mathcal{F}(x)$, we obtain a rule-consistent prediction from the unconstrained network output $\hat y=f_\theta(x)$,
by computing the $\ell_1$-closest feasible point,
\begin{equation}
y^\star(x)
\;\in\;
\arg\min_{y\in \mathcal{F}(x)} \|y-\hat y\|_1.
\label{eq:l1_proj}
\end{equation}
We use the $\ell_1$ distance because it admits an exact linear epigraph form that can be formulated as a linear program. This is crucial for our exactness results shown next in Theorem \ref{thm:dnf_exact}.

\paragraph{$\ell_1$ epigraph reformulation.}
We begin by rewriting the $\ell_1$ projection objective using a standard epigraph formulation.
Introduce auxiliary variables $\eta \in \mathbb{R}^d_{\ge 0}$ and define
\begin{equation}
\mathcal{E}(\hat y)
:=
\{(y,\eta)\in\mathbb{R}^d\times\mathbb{R}^d_{\ge 0}:\;
-\eta \le y-\hat y \le \eta\}.
\label{eq:epi_set}
\end{equation}
Then the projection problem~\eqref{eq:l1_proj} is equivalent to the linear program
\begin{equation}
\label{eq:l1_proj_epigraph}
\begin{aligned}
(y^\star(x),\eta^\star(x))
&\in \arg\min_{(y,\eta)}\ \mathbf{1}^\top \eta \\
\text{s.t.}\quad
&(y,\eta)\in \mathcal{E}(\hat y), \quad y\in \mathcal{F}(x),
\end{aligned}
\end{equation}
where $\mathcal{F}(x)$ denotes the (input-dependent) feasible set induced by the logical rules,
as defined earlier in Eq.~\eqref{eq:rule_feasible_set} .

\paragraph{Global constraints.}
In addition to logical rules, our framework supports global linear constraints on the output.
Let $\mathcal{G}(x)\subseteq\mathbb{R}^d$ denote the global constraint set. 

\paragraph{Disjuncts with the epigraph formulation and global constraints}
To derive our final exactness results, the global constraint set $\mathcal{G}(x)$ and the epigraph constraints $\mathcal{E}(\hat{y})$ must be incorporated
\emph{inside each disjunct}. Accordingly, for each polyhedral set $\mathcal{S}_{rj}(x)$, we extend it to incorporate the $\eta$ variables and associated constraints. The resulting set is defined as

\begin{equation}
\begin{aligned}
    \widehat{\mathcal{S}}_{rj}(x;\hat y)
:=
\Big\{(y,\eta): 
A_{rj}(x)y\le b_{rj}(x),\; \\
(y,\eta)\in \mathcal{E}(\hat y),\;
y\in \mathcal{G}(x)
\Big\}.
\end{aligned}
\label{eq:lifted_region}
\end{equation}

The set $\mathcal{C}_r(x)$ and $\mathcal{F}(x)$ can be extended to consider the epigraph variables and the global constraints as,
\begin{equation}
\begin{aligned}
    \widehat{\mathcal{C}}_r(x;\hat y) =
    \bigcup_{j=1}^{m} \widehat{\mathcal{S}}_{rj}(x;\hat y), \\
    \widehat{\mathcal{F}}(x;\hat y) = \bigcap_{r \in \mathcal{R}(x)} \widehat{\mathcal{C}}_r(x;\hat y)
\end{aligned}
\label{eq:lifted_consequent}
\end{equation}

The projection layer can be written compactly as
\begin{equation}
\begin{aligned}
    (y^\star(x,\hat{y}),\eta^\star(x,\hat{y}))
\;\in\;
\arg\min_{(y,\eta)}\ \mathbf{1}^\top \eta \\
\text{s.t.}\quad
(y,\eta)\in \widehat{\mathcal{F}}(x;\hat y),
\end{aligned}
\label{eq:lifted_projection}
\end{equation}

The resulting set $\widehat{\mathcal{F}}(x;\hat y)$ is however still non-convex due to the union across multiple disjuncts in each rule. Therefore, to allow differentiability, we apply a convex-hull relaxation $\operatorname{conv}(\widehat{\mathcal{F}}(x;\hat y))$ which is the smallest convex set encompassing the set $\widehat{\mathcal{F}}(x;\hat y)$. Before we formulate the convex hull relaxation for our problem, we provide the extended formulation that can represent the convex hull of the union of bounded polyhedra. The idea is to first provide the mathematical tool for convexifying each individual set $\widehat{\mathcal{C}}_r(x;\hat y)$ before convexifying their intersection.

\paragraph{Convex-hull reformulation of disjunctions}
The following proposition constructs the convex hull of the union of bounded polyhedra in a lifted space, by making \emph{copies} of the variables representing each of the possible disjunct.

\begin{proposition}[Extended formulation for $\operatorname{conv}(\cup_j \mathcal{S}_j)$]
\label{prop:conv_hull_union}
Let bounded polyhedral sets $\mathcal{S}_j = \{z : A_j z \le b_j\}$ for $j=1,\dots,m$,
where $z$ denotes the lifted variable $(y, \eta)$.
Then
\begin{align}
\operatorname{conv}\Big(\bigcup_{j=1}^m \mathcal{S}_j\Big)
=
\Big\{ w \ \Big|\
&\exists \{w_j\}_{j=1}^m,\ \{\lambda_j\}_{j=1}^m : \nonumber\\
& A_j w_j \le \lambda_j b_j \quad \forall j, \quad
w = \sum_{j=1}^m w_j, \nonumber\\
& \sum_{j=1}^m \lambda_j = 1,\quad \lambda_j \ge 0 \quad \forall j
\Big\}.
\end{align}
where $w_j$ are \emph{copies} of the lifted variable associated with each disjunct, and $\lambda$ represent the weight of the convex combination. See Theorem 2.1 \cite{balas2018disjunctive} for exact proof.

\end{proposition}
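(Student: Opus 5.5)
We prove the two inclusions separately. Write $P$ for the lifted set on the right-hand side and $K := \operatorname{conv}(\bigcup_j \mathcal{S}_j)$. We assume each $\mathcal{S}_j$ is nonempty; empty disjuncts can be dropped, as explained at the end.

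\emph{Inclusion $K \subseteq P$.} First we show that $P$ is convex. It is the projection onto $w$ of the polyhedron in $(w,\{w_j\},\{\lambda_j\})$ defined by the linear constraints $A_j w_j - \lambda_j b_j \le 0$, $w=\sum_j w_j$, $\sum_j\lambda_j=1$ and $\lambda\ge 0$. A linear projection of a convex set is convex, so $P$ is convex. Next, each $\mathcal{S}_k \subseteq P$: given $z\in\mathcal{S}_k$, set $\lambda_k=1$, $w_k=z$, and $\lambda_j=0$, $w_j=0$ for $j\ne k$. Every constraint then holds, since $A_j 0 \le 0\cdot b_j$ trivially. Thus $\bigcup_j\mathcal{S}_j\subseteq P$, and because $K$ is the smallest convex set containing this union, $K\subseteq P$.

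\emph{Inclusion $P\subseteq K$.} Take $w\in P$ with certificates $(w_j,\lambda_j)$ and let $J=\{j:\lambda_j>0\}$. For $j\in J$, set $z_j = w_j/\lambda_j$. Dividing $A_j w_j\le\lambda_j b_j$ by $\lambda_j$ gives $z_j\in\mathcal{S}_j$, so $\sum_{j\in J} w_j=\sum_{j\in J}\lambda_j z_j$ is a convex combination of points of the union, because $\sum_{j\in J}\lambda_j=1$.

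The main obstacle is the indices with $\lambda_j=0$. For these the constraint reads $A_j w_j\le 0$, so $w_j$ lies in the recession cone of $\mathcal{S}_j$. Here boundedness is used: a nonempty bounded polyhedron has trivial recession cone, so $\{d: A_j d\le 0\}=\{0\}$. Indeed, if $d\neq 0$ satisfied $A_j d\le 0$, then $z+td\in\mathcal{S}_j$ for every $t\ge 0$ and every $z\in\mathcal{S}_j$, contradicting boundedness. Hence $w_j=0$ for all $j\notin J$, and so $w=\sum_{j\in J}\lambda_j z_j\in K$.

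Finally, empty disjuncts must be removed before applying the formula. If $\mathcal{S}_j=\emptyset$, then the cone $\{A_j d\le 0\}$ need not be trivial, and the argument above can fail. Removing such disjuncts does not change $K$, so this causes no loss of generality. Without boundedness the same argument shows that $P$ equals the closure of $K$ rather than $K$ itself; this is why the hypothesis of bounded polyhedra is needed, and it matches Theorem 2.1 of \cite{balas2018disjunctive}.
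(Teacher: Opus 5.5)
Your proof is correct, and it supplies an argument the paper does not give. The paper only defers to Theorem~2.1 of \cite{balas2018disjunctive}, which is stated for nonempty but possibly unbounded polyhedra and identifies the projected lifted set with the \emph{closed} convex hull. You instead give the elementary two-inclusion proof specialized to bounded sets. Its single key lemma is that a nonempty bounded polyhedron has $\{d: A_j d\le 0\}=\{0\}$; this eliminates every copy $w_j$ with $\lambda_j=0$ and yields $\operatorname{conv}$ itself without any closure or limiting argument.

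Your route also exposes the nonemptiness hypothesis that the statement omits. In fact the statement as written, and not just your argument, fails without it, because an empty polyhedron is vacuously bounded. For example, $\mathcal{S}_1=\{0\}\subset\mathbb{R}^2$ and $\mathcal{S}_2=\{z: z_1\le -1,\ -z_1\le -1\}=\emptyset$ give $P=\{0\}\times\mathbb{R}$ while $K=\{0\}$. This is consistent with the appendix proof of the DNF exactness theorem, which restricts to the nonempty index set $\mathcal{K}^\star$.

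What the citation buys in return is generality, and this matters for how the proposition is used. The lifted disjuncts $\widehat{\mathcal{S}}_{rj}(x;\hat y)$ are unbounded in $\eta$, so your boundedness hypothesis is not literally satisfied there. Your argument adapts with one more line. If the $y$-part of every nonempty lifted disjunct is bounded, they all share the recession cone $\{0\}\times\mathbb{R}^d_{\ge 0}$. Then for any $i$ with $\lambda_i>0$ one has $(w_i+\sum_{j:\lambda_j=0}w_j)/\lambda_i\in\mathcal{S}_i$, so $w$ is again an honest convex combination.

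Finally, your closing remark that ``the same argument'' gives $\operatorname{cl}K$ in the unbounded case is slightly loose. The inclusion $\operatorname{cl}K\subseteq P$ needs that $P$, a projection of a polyhedron, is closed. The inclusion $P\subseteq\operatorname{cl}K$ needs a limiting step that places a vanishing weight $\epsilon$ on points $p_j+w_j/\epsilon$ with $p_j\in\mathcal{S}_j$.
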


\begin{figure}
    \centering
    \includegraphics[width=0.99\linewidth]{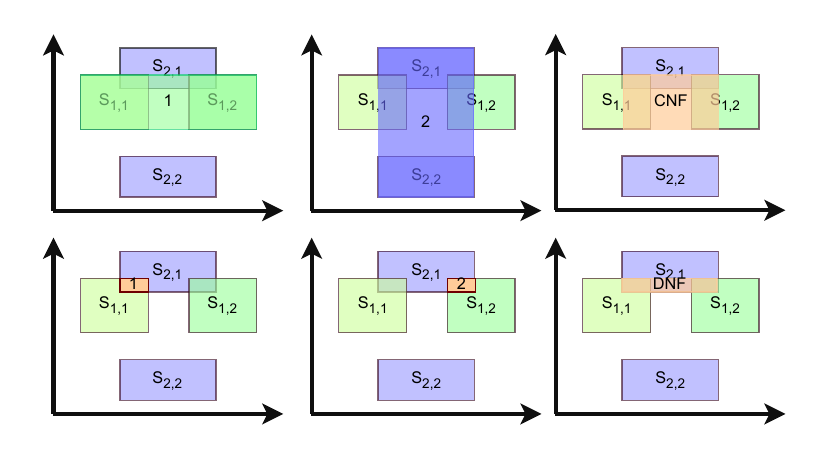}
    \caption{CNF and DNF illustration for 2 rules with 2 disjuncts each. The nonconvex set we are trying to convexify is $\big(\mathcal{S}_{1,1}\cup\mathcal{S}_{1,2}\big)\cap \big(\mathcal{S}_{2,1}\cup\mathcal{S}_{2,2}\big)$ \textbf{CNF} (top): Takes the convex hull of $\big(\mathcal{S}_{1,1}\cup\mathcal{S}_{1,2}\big)$ (top left) and the convex hull of $\big(\mathcal{S}_{2,1}\cup\mathcal{S}_{2,2}\big)$ (top middle) and intersect them (top right). \textbf{DNF} (bottom): Intersects the combinations of the disjuncts, $\mathcal{S}_{1,1}\cap \mathcal{S}_{2,1}$ (bottom left), $\mathcal{S}_{1,1}\cap \mathcal{S}_{2,2}$ (empty),  $\mathcal{S}_{1,2}\cap \mathcal{S}_{2,1}$ (bottom middle), $\mathcal{S}_{1,2}\cap \mathcal{S}_{2,2}$ (empty).  The DNF takes the convex hull of all the four intersections (bottom right). The DNF is a true subset of the CNF and represents the convex hull of the nonconvex set.}
    \label{fig:cnf_vs_dnf}
\end{figure}

Setting $\lambda_i = 1$ recovers the $i$-th disjunct exactly.
In contrast to binary selection variables used in MILP formulations, the continuous variables $\lambda$ allow convex combinations of disjuncts, yielding a linear relaxation. This extended formulation in Proposition \ref{prop:conv_hull_union} gives us the convex hull for each set $  \widehat{\mathcal{C}}_r(x;\hat y)$ individually. 

\paragraph{Multiple active rules.} We further consider how to convexify the set $ \widehat{\mathcal{F}}(x;\hat y) $, which is the intersection of multiple unions of polyhedral set   $\widehat{\mathcal{C}}_r(x;\hat y)$.
The \emph{conjunctive normal form (CNF)} relaxation convexifies each rule independently
and then intersects the resulting convex sets:
\begin{equation}
\widetilde{\mathcal{F}}_{\mathrm{CNF}}(x; \hat{y} )
=
\bigcap_{r\in\mathcal{R}(x)}
\operatorname{conv}\!\big(\widehat{\mathcal{C}}_r(x;\hat y)\big).
\end{equation}
Note that in general, $\widetilde{\mathcal{F}}_{\mathrm{CNF}}(x; \hat{y} )$ is not the convex hull of $\mathcal{\hat{F}}(x;\hat{y})$ because the convex hull operator and the intersection are not interchangeable. If each $\widehat{\mathcal{C}}_r(x;\hat y)$ is nonconvex, we have $\operatorname{conv}\!\big(\mathcal{\hat{F}}(x;\hat{y})\big)\subset\widetilde{\mathcal{F}}_{\mathrm{CNF}}(x; \hat{y})$.

To obtain the convex hull of all the active rules, $\operatorname{conv}\!\big(\mathcal{\hat{F}}(x;\hat{y})\big)$, the \emph{disjunctive normal form (DNF)} relaxation distributes intersections over unions by enumerating all
possible \emph{intersections} formed by selecting one polyhedral set from each active rule. Let
$\Pi(x):=\prod_{r\in\mathcal{R}(x)} [m_r]$ denote the set of such selections (each $k\in\Pi(x)$ is a tuple $k=(k_r)_{r\in\mathcal{R}(x)}$ with $k_r\in\{1,\dots,m_r\}$). Then
\begin{equation}\label{eq:dnfdef}
\widetilde{\mathcal{F}}_{\mathrm{DNF}}(x; \hat y)
=
\operatorname{conv}\!\left(
\bigcup_{k \in \Pi(x)}
\ \bigcap_{r\in\mathcal{R}(x)} \widehat{\mathcal{S}}_{r,k_r}(x;\hat y)
\right).
\end{equation}
Note that each of the intersection $\bigcap_{r\in\mathcal{R}(x)}\widehat{\mathcal{S}}_{r,k_r}(x;\hat y)$ is a polyhedron since it is an intersection of a finite number of polyhedra. The intersection can be easily represented by combining all the linear inequalities in each of the sets $\widehat{\mathcal{S}}_{r,k_r}(x;\hat y), \; \forall r \in \mathcal{R}(x)$.  Eq \eqref{eq:dnfdef} can be seen as the convex hull of a finite union of polyhedra. Therefore, we can apply the extended formulation for a finite union of polyhedra defined in Proposition \ref{prop:conv_hull_union} to obtain an LP representation of  $\widetilde{\mathcal{F}}_{\mathrm{DNF}}(x; \hat y)$. It can be shown $\widetilde{\mathcal{F}}_{\mathrm{DNF}}(x; \hat y)=\operatorname{conv}\!\big(\mathcal{\hat{F}}(x;\hat{y})\big)$, i.e., the DNF relaxation gives the exact convex hull of the projection problem in Eq \eqref{eq:lifted_projection}. This exactness is formalized in Theorem ~\ref{thm:dnf_exact} and proved in Appendix~\ref{app:dnfexact}.

\begin{remark}[Scalability-tightness trade-off]
The CNF relaxation introduces one convex-hull formulation per active rule
and therefore scales linearly with $|\mathcal{R}(x)|$.
In contrast, the DNF relaxation requires enumerating all combinations of polyhedral regions
across active rules, resulting in a number of terms that grows exponentially in $|\mathcal{R}(x)|$.
As a result, CNF provides a scalable but weaker relaxation,
while DNF yields a tighter relaxation of the feasible set
at the expense of higher computational complexity. An illustrative example comparing the DNF and CNF is shown in Figure \ref{fig:cnf_vs_dnf}.
\end{remark}

\begin{corollary}
If each $\widehat{\mathcal{C}}_r(x;\hat y)$ is convex
for all active rules $r\in\mathcal{R}(x)$, then
\[
\widetilde{\mathcal{F}}_{\mathrm{CNF}}(x;\hat{y} )
=
\widetilde{\mathcal{F}}_{\mathrm{DNF}}(x;\hat{y}).
\]
\end{corollary}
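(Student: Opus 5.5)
The plan is to reduce both sides to $\operatorname{conv}\big(\widehat{\mathcal{F}}(x;\hat y)\big)$. For the DNF side we invoke Theorem~\ref{thm:dnf_exact}, which gives $\widetilde{\mathcal{F}}_{\mathrm{DNF}}(x;\hat y)=\operatorname{conv}\big(\widehat{\mathcal{F}}(x;\hat y)\big)$. If one prefers not to rely on it, the same identity follows directly from the distributive law:
\[
\bigcap_{r\in\mathcal{R}(x)}\bigcup_{j}\widehat{\mathcal{S}}_{rj}=\bigcup_{k\in\Pi(x)}\bigcap_{r\in\mathcal{R}(x)}\widehat{\mathcal{S}}_{r,k_r}.
\]
This is a pure set identity. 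A point lies in the left-hand side iff for each active $r$ it lies in some $\widehat{\mathcal{S}}_{r,k_r}$, and collecting these indices gives a selection $k\in\Pi(x)$. Taking convex hulls of both sides shows that \eqref{eq:dnfdef} equals $\operatorname{conv}\big(\widehat{\mathcal{F}}(x;\hat y)\big)$. So the DNF set is always the convex hull of the intersection, with no convexity hypothesis needed.

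For the CNF side we use the hypothesis. If $\widehat{\mathcal{C}}_r(x;\hat y)$ is convex, then $\operatorname{conv}\big(\widehat{\mathcal{C}}_r(x;\hat y)\big)=\widehat{\mathcal{C}}_r(x;\hat y)$, since the convex hull of a convex set is itself. Hence
\[
\widetilde{\mathcal{F}}_{\mathrm{CNF}}(x;\hat y)=\bigcap_{r\in\mathcal{R}(x)}\widehat{\mathcal{C}}_r(x;\hat y)=\widehat{\mathcal{F}}(x;\hat y).
\]
An intersection of convex sets is convex, so $\widehat{\mathcal{F}}(x;\hat y)$ is convex and equals its own convex hull. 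Combining this with the DNF identity gives the claim.

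The main subtlety is the equality $\operatorname{conv}(\widehat{\mathcal{C}}_r)=\widehat{\mathcal{C}}_r$ when the convex hull is realised through the extended formulation of Proposition~\ref{prop:conv_hull_union} rather than abstractly. That proposition requires bounded, and if needed nonempty, disjuncts; empty disjuncts contribute $w_j=0$, $\lambda_j=0$. The $\eta$-bounds together with the epigraph constraints bound $\eta$, and therefore $y$ once $\hat y$ is fixed. Given this, the lifted LP set projects exactly onto the convex hull, so the abstract argument transfers to the LP-represented sets. I would state this boundedness remark explicitly and otherwise keep the proof at the level of the set identities above.
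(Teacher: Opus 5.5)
Your argument is correct and is essentially the reasoning the paper relies on (it states this corollary without a separate proof): the distributive law gives $\widetilde{\mathcal{F}}_{\mathrm{DNF}}(x;\hat y)=\operatorname{conv}(\widehat{\mathcal{F}}(x;\hat y))$, and convexity of each $\widehat{\mathcal{C}}_r$ collapses the CNF set to $\bigcap_r \widehat{\mathcal{C}}_r=\widehat{\mathcal{F}}$, which is convex and therefore equal to its own hull.

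Your last paragraph, which the set identity does not need, contains an error. $\mathcal{E}(\hat y)$ imposes only $\eta\ge|y-\hat y|$ and $\eta\ge 0$, so $\eta$ is \emph{not} bounded, and the lifted disjuncts are unbounded in the $+\eta$ direction (as in the paper's one-rule example). Proposition~\ref{prop:conv_hull_union} therefore does not literally apply. The transfer to the LP still works, because Balas' formulation yields the closed convex hull, and under your hypothesis $\widehat{\mathcal{C}}_r$ and $\widehat{\mathcal{F}}$ are already closed and convex.
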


\begin{theorem}[Exact rule satisfaction under DNF projection]\label{thm:dnf_exact}
When solving the problem in Eq. ~\eqref{eq:lifted_projection} with $\widetilde{\mathcal{F}}_{\mathrm{DNF}}(x; \hat y)$ using the extended LP formulation, the extreme point solution to the LP satisfies the constraints of the original formulation Eq \eqref{eq:lifted_projection}.
\end{theorem}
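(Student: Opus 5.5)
The proof has two parts. First, show that $\widetilde{\mathcal{F}}_{\mathrm{DNF}}(x;\hat y)=\operatorname{conv}(\widehat{\mathcal{F}}(x;\hat y))$. Second, show that minimizing a linear objective over the extended LP formulation of this convex hull attains its optimum at a vertex that lies in a single disjunct intersection, and hence in $\widehat{\mathcal{F}}(x;\hat y)$.

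\textbf{Step 1: the DNF union equals the original set.} Intersection distributes over finite unions, so
\[
\widehat{\mathcal{F}}(x;\hat y)=\bigcap_{r\in\mathcal{R}(x)}\bigcup_{j=1}^{m_r}\widehat{\mathcal{S}}_{rj}(x;\hat y)=\bigcup_{k\in\Pi(x)}\bigcap_{r\in\mathcal{R}(x)}\widehat{\mathcal{S}}_{r,k_r}(x;\hat y).
\]
This is checked pointwise. A point lies in the left-hand side if and only if, for each active rule $r$, it lies in some $\widehat{\mathcal{S}}_{r,k_r}$. That choice of indices defines a selection $k\in\Pi(x)$. Taking convex hulls of both sides gives $\widetilde{\mathcal{F}}_{\mathrm{DNF}}=\operatorname{conv}(\widehat{\mathcal{F}})$. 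Each $P_k:=\bigcap_r\widehat{\mathcal{S}}_{r,k_r}$ is a polyhedron, and we may discard the empty ones.

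\textbf{Step 2: boundedness.} To apply Proposition~\ref{prop:conv_hull_union}, each $P_k$ must be bounded. The $y$-components are bounded by the boundedness assumption on the disjuncts, inherited from Proposition~\ref{prop:MILPequivalent}. The $\eta$-components, however, are unbounded above in $\mathcal{E}(\hat y)$. I would handle this by adding a valid upper bound $\eta\le M\mathbf{1}$, with $M$ large enough to exceed $|y_i-\hat y_i|$ over the bounded $y$-region. This bound removes no optimal solution, because the objective minimizes $\mathbf{1}^\top\eta$. If this is judged unnatural, an alternative is to argue directly with recession cones: all $P_k$ share the recession cone $\{0\}\times\mathbb{R}^d_{\ge0}$, and in that case the extended formulation still describes the closed convex hull.

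\textbf{Step 3: vertices of the extended formulation.} Consider the lifted polyhedron $Q=\{(w,\{w_k\},\{\lambda_k\}) : A_kw_k\le\lambda_kb_k,\ w=\sum_k w_k,\ \sum_k\lambda_k=1,\ \lambda\ge0\}$. I would show that every extreme point of $Q$ has $\lambda=e_{k^*}$ for some $k^*$, with $w_k=0$ for $k\ne k^*$ and $w=w_{k^*}\in P_{k^*}$.

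The argument is by decomposition. Suppose an extreme point has at least two indices with $\lambda_k>0$. For any index with $\lambda_k>0$, the point $w_k/\lambda_k$ lies in $P_k$. For any index with $\lambda_k=0$, boundedness of $P_k$ forces $A_kw_k\le0$ to imply $w_k=0$. The point can then be written as the convex combination $\sum_k\lambda_k\,\xi^{(k)}$, where $\xi^{(k)}$ is the feasible point that places $w_k/\lambda_k$ in block $k$ and zeros elsewhere. The points $\xi^{(k)}$ are distinct because their $\lambda$-blocks differ. This is a nontrivial convex combination, contradicting extremality. I expect this decomposition step, together with the boundedness subtlety in Step 2, to be the main technical obstacle.

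\textbf{Step 4: conclusion.} The LP minimizes the linear objective $\mathbf{1}^\top\eta$ over $Q$. $Q$ is pointed: it is bounded after Step 2, or has a pointed recession cone. The objective is bounded below by $0$. Hence an optimal solution exists at an extreme point, and the simplex method or any vertex-returning solver finds one. By Step 3, this solution satisfies $w\in P_{k^*}\subseteq\widehat{\mathcal{F}}(x;\hat y)$. So $(y^\star,\eta^\star)$ satisfies every active rule, the epigraph constraints and the global constraints. Since it is also optimal over the relaxation $\operatorname{conv}(\widehat{\mathcal{F}})\supseteq\widehat{\mathcal{F}}$, it is optimal for Eq.~\eqref{eq:lifted_projection} as well.
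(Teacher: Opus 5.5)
Your proof is correct, but it takes a different route from the paper's.

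\textbf{The paper's route.} The paper assumes the DNF-union form $\widehat{\mathcal{F}}=\bigcup_k\widehat{\mathcal{T}}_k$ as a hypothesis. It then cites Balas's Theorem~2.1 for the extended formulation, with a parenthetical assertion that the boundedness and closedness conditions hold. After that it works entirely in the projected $w$-space: an optimal extreme point of $\operatorname{conv}(\bigcup_k\widehat{\mathcal{T}}_k)$ lies in some $\widehat{\mathcal{T}}_{k^\star}$ by Balas's Corollary~2.2.

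\textbf{Your route and what it buys.} You prove the distributive identity rather than assume it. You also notice that the lifted terms are unbounded in $\eta$, so the bounded-polyhedra hypothesis of the extended-formulation proposition does not literally apply. Most usefully, you characterize the vertices of the lifted polyhedron $Q$ directly, showing each has $\lambda=e_{k^\star}$. This matters because a simplex solver returns a vertex of $Q$, not a vertex of the hull in $w$-space. A vertex of $Q$ projects only to a vertex of some $\widehat{\mathcal{T}}_{k^\star}$, and that point need not be extreme in the hull. So the paper's argument, read strictly, covers optimal hull vertices, and its remark about solver outputs quietly identifies the two. Your Step~3 addresses exactly the object the statement refers to: an extreme-point solution of the extended LP. The paper's route is shorter because it relies on citation.

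\textbf{A tightening.} Step~3 does not actually need boundedness. Suppose $\lambda_j=0$ and $w_j\neq 0$, so that $A_jw_j\le 0$. Replacing $(w_j,w)$ by $\bigl((1\pm\tfrac12)w_j,\;w\pm\tfrac12 w_j\bigr)$ gives two distinct feasible points whose average is the given point. Hence extremality alone forces $w_j=0$, and this also handles empty terms without discarding them.

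This lets you drop the truncation $\eta\le M\mathbf{1}$. As written, that truncation changes the LP, so you would still need to check that optimal points of the untruncated $Q$ satisfy the homogenized bound $\eta_k\le M\lambda_k$. For existence of an optimal vertex you then only need three things:
\begin{itemize}
\item feasibility of the LP;
\item $\mathbf{1}^\top\eta\ge 0$, so the objective is bounded below;
\item $Q$ contains no line. Along any line, $\lambda\ge 0$ forces the $\lambda$-direction to vanish, and then the homogenized rows $\eta_k\ge 0$ and $-\eta_k\le y_k-\lambda_k\hat y\le\eta_k$ force the $y$- and $\eta$-directions to vanish as well.
\end{itemize}
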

A formal version of the theorem and the proof can be found in Appendix \ref{app:dnfexact}.
\begin{remark}
In particular, if the lifted LP is solved with a solver returning an extreme point (e.g., using the simplex algorithm), the returned solution satisfies all original rules.
\end{remark}

\begin{remark}
An analogous guarantee does not generally hold for the CNF relaxation:
extreme points of $\widetilde{\mathcal{F}}_{\mathrm{CNF}}(x;\hat y)$ need not correspond to extreme points of the convex hull of the original set.
\end{remark}

\begin{figure}[t]
\centering
\begin{tikzpicture}[scale=0.62, every node/.style={font=\small}]

% -----------------------------
% Style definitions
% -----------------------------
\tikzset{
    axis/.style={->, line width=0.8pt},
    smallaxis/.style={->, line width=0.65pt},
    boundary/.style={draw=black, line width=0.7pt},
    bridge/.style={draw=black, dashed, line width=0.7pt},
    guide/.style={draw=black, dashed, line width=0.6pt},
    optpoint/.style={circle, fill=orange!80, draw=white, line width=0.8pt, minimum size=6pt, inner sep=0pt},
    yhatpoint/.style={circle, fill=yellow!85, draw=white, line width=0.8pt, minimum size=6pt, inner sep=0pt},
    liftarrow/.style={->, line width=0.8pt},
    projarrow/.style={->, draw=green!40!black, line width=0.9pt},
    ruleone/.style={blue!70!black, line width=2.0pt, opacity=0.85},
    ruletwo/.style={red!70!black, line width=2.0pt, opacity=0.85},
    dnfseg/.style={green!45!black, line width=3.0pt, opacity=0.70},
}

% ============================================================
% Top panel: original rules and DNF intersections on one axis
% ============================================================
\begin{scope}[shift={(0,8.05)}]

    % Main number line, same horizontal scale as bottom panel
    \draw[smallaxis] (0,0) -- (10.6,0) node[right] {$y$};

    % Major ticks: 0 through 10, same as bottom panel
    \foreach \x in {0,1,...,10} {
        \draw[line width=0.5pt] (\x,0.08) -- (\x,-0.08);
        \node[below] at (\x,-0.12) {$\x$};
    }

    % C1: [0,3] union [6,10]
    \draw[ruleone] (0,2.25) -- (3,2.25);
    \draw[ruleone] (6,2.25) -- (10,2.25);
    \node[blue!70!black, below] at (1.5,2.30) {$S_{11}$};
    \node[blue!70!black, below] at (8.0,2.30) {$S_{12}$};

    % C2: [0,4] union [7,10]
    \draw[ruletwo] (0,1.55) -- (4,1.55);
    \draw[ruletwo] (7,1.55) -- (10,1.55);
    \node[red!70!black, below] at (2.0,1.60) {$S_{21}$};
    \node[red!70!black, below] at (8.5,1.60) {$S_{22}$};

    % DNF feasible intersections
    \draw[dnfseg] (0,0.85) -- (3,0.85);
    \draw[dnfseg] (7,0.85) -- (10,0.85);
    \node[green!35!black, below] at (1.5,0.80) {$S_1=S_{11}\cap S_{21}$};
    \node[green!35!black, below] at (8.5,0.80) {$S_2=S_{12}\cap S_{22}$};

    % Legend to the right, horizontal list
    \begin{scope}[shift={(2.55,1.85)}]
        \draw[ruleone] (0,1) -- (0.55,1);
        \node[right] at (0.65,1) {$C_1$};

        \draw[ruletwo] (1.65,1) -- (2.20,1);
        \node[right] at (2.30,1) {$C_2$};

        \draw[dnfseg] (3.30,1) -- (3.85,1);
        \node[right] at (3.95,1) {DNF};
    \end{scope}

    % Panel title below the number line
    % \node[font=\bfseries] at (5,-1.15) {Original rules and DNF intersections};

\end{scope}

% ============================================================
% Bottom panel: lifted convex hull
% ============================================================
\begin{scope}[shift={(0,-0.55)}]

% Original lifted disjunct S1
\path[
    draw=green!50!black,
    fill=green!45,
    fill opacity=0.65,
    line width=0.7pt
]
    (0,4.5) --
    (3,1.5) --
    (3,6.0) --
    (0,6.0) --
    cycle;

% Original lifted disjunct S2
\path[
    draw=green!50!black,
    fill=green!45,
    fill opacity=0.65,
    line width=0.7pt
]
    (7,2.5) --
    (10,5.5) --
    (10,6.0) --
    (7,6.0) --
    cycle;

% Convex-hull bridge between lifted disjuncts
\path[
    draw=green!40!black,
    fill=green!20,
    fill opacity=0.55,
    line width=0.6pt
]
    (3,1.5) --
    (7,2.5) --
    (7,6.0) --
    (3,6.0) --
    cycle;

% Lower boundary of lifted hull
\draw[boundary] (0,4.5) -- (3,1.5);
\draw[bridge]   (3,1.5) -- (7,2.5);
\draw[boundary] (7,2.5) -- (10,5.5);

% Axes
\draw[axis] (0,0) -- (10.6,0) node[right] {$y$};
\draw[axis] (0,0) -- (0,6.4) node[above] {$\eta$};

% Ticks and labels: x-axis
\foreach \x in {0,1,...,10} {
    \draw[line width=0.5pt] (\x,0.08) -- (\x,-0.08);
    \node[below] at (\x,-0.12) {$\x$};
}

% Ticks and labels: y-axis
\foreach \yy in {1,2,3,4,5} {
    \draw[line width=0.5pt] (0.08,\yy) -- (-0.08,\yy);
    \node[left] at (-0.12,\yy) {$\yy$};
}

% Original DNF disjuncts on y-axis
\draw[dnfseg] (0,0) -- (3,0);
\draw[dnfseg] (7,0) -- (10,0);

\node[green!35!black, above] at (1.5,0.16) {$S_1$};
\node[green!35!black, above] at (8.5,0.16) {$S_2$};

% Important points
\node[yhatpoint] (yhat0) at (4.5,0) {};
\node[yhatpoint] (yhathull) at (4.5,1.875) {};
\node[optpoint] (opt) at (3,1.5) {};

% Vertical lifting arrow
\draw[liftarrow] (4.5,0.15) -- (4.5,1.68);
\node[right] at (4.62,1.35) {Lifting};

% Projection arrow
\draw[projarrow] (4.38,1.86) -- (3.18,1.53);

% Dashed guides to optimum
\draw[guide] (3,0) -- (3,1.5);
\draw[guide] (0,1.5) -- (3,1.5);

% Text labels
\node[align=center, left] at (2.35,1.8) {Optimum};
\node[below right] at (4.68,0.7) {$\hat y=4.5$};

\node at (1.45,4.9) {$\widehat S_1$};
\node at (8.25,4.9) {$\widehat S_2$};

\end{scope}

\end{tikzpicture}

\caption{
\textbf{Lifted projection for two active rules.}
The top number line shows two active rules,
\(C_1=S_{11}\cup S_{12}\) and \(C_2=S_{21}\cup S_{22}\), whose DNF intersections give
\(S_1\cup S_2=[0,3]\cup[7,10]\).
Following Eqs.~\eqref{eq:epi_set}-\eqref{eq:dnfdef}, these DNF terms are lifted into \((y,\eta)\)-space as \(\widehat S_1\) and \(\widehat S_2\).
For \(\hat y=4.5\), minimizing \(\eta\) over the lifted convex hull returns the extreme point \(y^\star=3\), which satisfies both rules exactly.
}
\label{fig:lifted_dnf_example}

\end{figure}

Figure~\ref{fig:lifted_dnf_example} illustrates the exact-satisfaction mechanism for two active disjunctive rules. The top number line shows how the DNF expansion keeps only the consistent intersections of the active disjuncts, yielding the nonconvex feasible set \(S_1\cup S_2=[0,3]\cup[7,10]\). The key step is that the epigraph constraints defining the \(\ell_1\) projection distance are introduced before convexification, as in Eqs.~\eqref{eq:epi_set}-\eqref{eq:dnfdef}. Thus, the DNF terms are lifted into \((y,\eta)\)-space, and the projection minimizes the vertical coordinate \(\eta\) over the lifted convex hull rather than over a naive convexification in the original \(y\)-space. Geometrically, this sends the infeasible prediction toward the lowest point of the lifted hull. When the LP optimum is an extreme point, it corresponds to one original DNF term and therefore gives a prediction satisfying all active rules. Appendix~\ref{app:projection-examples} provides the full derivation.

\paragraph{Sequential convexification and basic step.}
The CNF and DNF relaxations represent two extremes in the trade-off between model size and tightness.
In many applications, only a subset of rules exhibits strong interactions that benefit from a DNF expansion.
This motivates a \emph{sequential} convexification scheme ~\cite{balas2018disjunctive}, that expands disjunctions for a selected subset of rules.

Let $\mathcal{R}(x)=\mathcal{R}_D(x)\cup\mathcal{R}_C(x)$ be a partition of the active rules,
where $\mathcal{R}_D(x)$ are expanded in DNF form and $\mathcal{R}_C(x)$ are kept in CNF form. The resulting relaxation denoted by $\widetilde{\mathcal{F}}_{\mathrm{pDNF}}(x; \hat y,\mathcal{R}_C(x), \mathcal{R}_D(x)   )$ is
\begin{equation}
\begin{aligned}
\widetilde{\mathcal{F}}_{\mathrm{pDNF}}(x; \hat y, \mathcal{R}_C(x), \mathcal{R}_D(x))
= & \\
\bigcap_{r\in\mathcal{R}_C(x)}
\operatorname{conv}\!\big(\widehat{\mathcal{C}}_r(x;\hat y)\big)
\ \cap & \\
\operatorname{conv}\!\Bigg(
\bigcup_{k \in \Pi(x)}
\ \bigcap_{r\in\mathcal{R}_D(x)}
\widehat{\mathcal{S}}_{r,k_r}(x;\hat y)
\Bigg). &
\end{aligned}
\end{equation}

This sits in between CNF and DNF in terms of model complexity and relaxation tightness. A \emph{basic step} is defined as converting a given relaxation
\(\widetilde{\mathcal{F}}_{\mathrm{pDNF}}(x; \hat y, \mathcal{R}_C(x), \mathcal{R}_D(x))\)
to
\(\widetilde{\mathcal{F}}_{\mathrm{pDNF}}(x; \hat y, \mathcal{R}_C(x)\backslash\{r'\}, \mathcal{R}_D(x)\cup\{r'\})\),
i.e., it adds an additional rule to the DNF form. In practice, basic steps can be sequentially applied to convexify the CNF. 
\begin{remark}
    Currently, there is no well-defined notion of an ``optimal'' sequence in which to apply the basic steps. In the experiments section, we illustrate the tradeoff between tightness and tractability by applying a fixed sequence of basic steps determined by a predefined lexicographic order of the rules. A more systematic investigation of how to choose which rules to intersect at each iteration is left for future work.
\end{remark}

\begin{corollary}[Ordering and special cases]
For any input $x$ and any such partition,
\[
\widetilde{\mathcal{F}}_{\mathrm{DNF}}(x; \hat y)
\subseteq
\widetilde{\mathcal{F}}_{\mathrm{pDNF}}(x; \hat y)
\subseteq
\widetilde{\mathcal{F}}_{\mathrm{CNF}}(x; \hat y).
\]
Moreover, $\mathcal{R}_D(x)=\emptyset$ recovers CNF,
$\mathcal{R}_D(x)=\mathcal{R}(x)$ recovers DNF.
\end{corollary}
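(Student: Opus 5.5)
The corollary has two parts: the set inclusions and the two special cases. I would establish the inclusions first, since the special cases then follow by specializing the partition.

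\textbf{First inclusion.} For $\widetilde{\mathcal{F}}_{\mathrm{DNF}} \subseteq \widetilde{\mathcal{F}}_{\mathrm{pDNF}}$, I would use that $\widetilde{\mathcal{F}}_{\mathrm{DNF}}(x;\hat y) = \operatorname{conv}(\widehat{\mathcal{F}}(x;\hat y))$. One direction of this identity is all that is needed and is elementary: distributing intersection over union gives
\[
\widehat{\mathcal{F}}(x;\hat y)=\bigcap_{r\in\mathcal{R}(x)}\bigcup_{j}\widehat{\mathcal{S}}_{rj}(x;\hat y)=\bigcup_{k\in\Pi(x)}\bigcap_{r\in\mathcal{R}(x)}\widehat{\mathcal{S}}_{r,k_r}(x;\hat y).
\]
This is just the statement that $z$ lies in every union if and only if one can choose an index $k_r$ per rule with $z\in\widehat{\mathcal{S}}_{r,k_r}$. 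Since $\widetilde{\mathcal{F}}_{\mathrm{pDNF}}$ is an intersection of convex hulls, it is convex. So it suffices to show $\widehat{\mathcal{F}}(x;\hat y)\subseteq \widetilde{\mathcal{F}}_{\mathrm{pDNF}}$, because the convex hull is the smallest convex superset. Take $z\in\widehat{\mathcal{F}}$.
\begin{itemize}
\item For $r\in\mathcal{R}_C(x)$, we have $z\in\widehat{\mathcal{C}}_r\subseteq\operatorname{conv}(\widehat{\mathcal{C}}_r)$.
\item Applying the same distributive identity to the subfamily $\mathcal{R}_D(x)$ gives $z\in\bigcap_{r\in\mathcal{R}_D}\widehat{\mathcal{C}}_r=\bigcup_{k}\bigcap_{r\in\mathcal{R}_D}\widehat{\mathcal{S}}_{r,k_r}$, and this lies inside its convex hull.
\end{itemize}
Here I read $\Pi$ in the pDNF definition as ranging over selections for rules in $\mathcal{R}_D(x)$ only; this is the interpretation I will state explicitly.

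\textbf{Second inclusion.} For $\widetilde{\mathcal{F}}_{\mathrm{pDNF}}\subseteq\widetilde{\mathcal{F}}_{\mathrm{CNF}}$, the CNF terms for $r\in\mathcal{R}_C$ appear in both sets. It therefore suffices to show
\[
\operatorname{conv}\Big(\bigcup_k\bigcap_{r\in\mathcal{R}_D}\widehat{\mathcal{S}}_{r,k_r}\Big)\subseteq\operatorname{conv}(\widehat{\mathcal{C}}_{r'}) \quad\text{for each } r'\in\mathcal{R}_D.
\]
Each polyhedron $\bigcap_{r\in\mathcal{R}_D}\widehat{\mathcal{S}}_{r,k_r}$ is contained in $\widehat{\mathcal{S}}_{r',k_{r'}}\subseteq\widehat{\mathcal{C}}_{r'}$. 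Hence the whole union lies in $\widehat{\mathcal{C}}_{r'}$, and monotonicity of $\operatorname{conv}$ gives the claim. Intersecting over $r'$ and with the common CNF part finishes this inclusion.

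\textbf{Special cases.} If $\mathcal{R}_D=\emptyset$, the DNF block is a hull over an empty intersection with a single empty selection. I would adopt the convention that this block is the whole lifted space, or equivalently that it is omitted, so pDNF reduces to CNF. If $\mathcal{R}_D=\mathcal{R}(x)$, the CNF block is an empty intersection, again the whole space, and pDNF coincides with Eq.~\eqref{eq:dnfdef} verbatim.

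\textbf{Main obstacle.} The mathematics is monotonicity of $\operatorname{conv}$ together with the distributive law, so the real work is bookkeeping. The step needing most care is fixing the conventions: that $\Pi$ in pDNF is indexed by $\mathcal{R}_D(x)$, and that empty intersections are interpreted as the whole space. I will state these explicitly before running the two inclusion arguments.
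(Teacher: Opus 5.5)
Your proposal is correct. The paper gives no separate proof of this corollary and treats it as immediate from two facts: the distributive identity $\widehat{\mathcal{F}}(x;\hat y)=\bigcup_{k\in\Pi(x)}\bigcap_{r\in\mathcal{R}(x)}\widehat{\mathcal{S}}_{r,k_r}(x;\hat y)$ behind the DNF construction, and monotonicity plus minimality of $\operatorname{conv}$; your argument uses exactly these. Your explicit conventions (indexing the pDNF selections by $\mathcal{R}_D(x)$ only, and reading an empty intersection as the whole lifted space) are the right way to make the special cases precise, and reading $\Pi(x)$ as the full product over $\mathcal{R}(x)$ would leave the pDNF union unchanged (terms merely repeat) as long as every $m_r\ge 1$.
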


% \cl{Should we add if the LP solver returns an extreme point, e.g, a simplex solver is used, the solution always satisfy the constraint? It will make it sound stronger than there exist a solution that satisfies the constraints. ADDRESSED}

%\cl{This does not look right given my previous comment} ADDRESSED
\begin{figure*}[t]
    \centering
    \includegraphics[width=\linewidth]{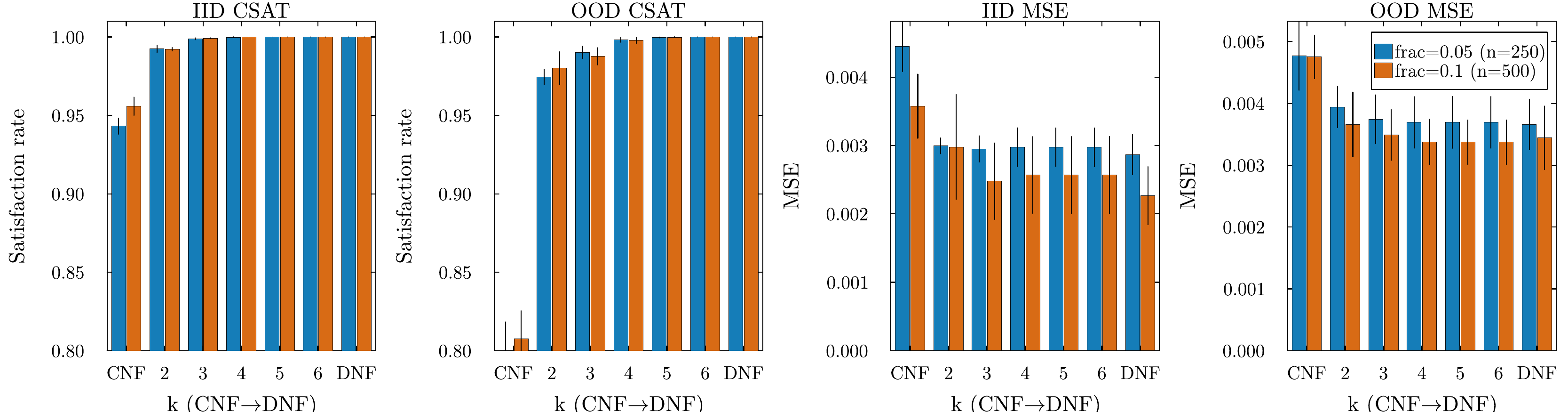}
    \caption{\textbf{Synthetic task: Sequential convexification.}
     Tightening the relaxation from CNF toward DNF improves constraint satisfaction (CSAT) and improves MSE, across two training set sizes. The CNF achieves significantly less constraint satisfaction in the OOD test set, and higher MSE compared to DNF and the intermediates. The plateau in CSAT is observed due to the fact that very few samples have such a large number of active rules.}
    \label{fig:toy_convex_four}
\end{figure*}

\paragraph{Projection as a differentiable linear program}
For each fixed input $x$, the projection to the convex hull relaxation of the CNF/DNF/pDNF can be written as an LP in extended form, with $\hat y$ entering only through the $\ell_1$ epigraph constraints (see formulation in proof of Theorem \ref{thm:dnf_exact} in Appendix \ref{app:dnfexact}). As an illustration, in the DNF convex-hull formulation, we introduce disjunct-specific copies $(y_k,\eta_k)$ and the convex combination weights $\lambda_k$, and the epigraph constraints take the form
\begin{equation}
\eta_k \ge y_k - \lambda_k \hat y,\qquad
\eta_k \ge \lambda_k \hat y - y_k,
\label{eq:epi_bilinear_param}
\end{equation}
together with the aggregation 
\begin{equation}\label{eq:aggregation}
    \sum_k y_k=y, \quad \sum_k \lambda_k=1
\end{equation}
Note that Eqs \eqref{eq:epi_bilinear_param} and \eqref{eq:aggregation} are the only two places the unconstrained prediction from the NN, $\hat{y}$, appears.  To train the deep learning model end-to-end, we need to differentiate the optimal solution to the linear program, $y^\star(x;\hat y)$,  with respect to parameter $\hat y$. The gradient is computed by implicit differentiation of the KKT conditions in open-source software like CVXPYlayer \cite{agrawal2019differentiable} and  DiffOpt.jl~\cite{besancon2023diffopt}). A concise summary of the complete forward and backward pass is provided in Appendix~\ref{app:algorithm}. The code is open sourced as a general use package \texttt{DisjunctiveNet.jl} \footnote{\url{https://github.com/li-group/DisjunctiveNet.jl.git}}.

\paragraph{Improving differentiability of the optimization layer}  One caveat here is that the parameters $\hat{y}$ appear in the left hand side of the LP, which may cause the optimal solution $y^\star(x;\hat y)$ to change discontinuously. To resolve this issue, a common approach is to add a quadratic regularizer and make the problem strongly convex. This approximation has been used in multiple previous works to make LP differentiable \cite{sadana2025survey}. We use the extended form LP as the forward pass and the gradient from the regularized problem (a strongly convex quadratic program) in the backward pass.

\section{Experiments}
We evaluate our method against several baselines under \emph{input-dependent logical constraints} on (i) a synthetic cooling-control task and (ii) single-cell RNA sequencing (scRNA-seq) classification with marker-gene rules. 

\paragraph{Baselines.} A base NN acts as an unconstrained model unaware of any rules. A penalty-based NN (pen) adds penalty terms for all active rules to the loss function, which acts as a soft method for rule enforcement. We also include a finetuned penalty baseline (fine-pen), which is initialized from the trained base NN and then finetuned with the same penalty terms, providing a direct comparison to projection-based finetuning under the same pretrained initialization. All projection-based models (CNF and DNF), as described Section~\ref{sec:eqconvex}, are initialized from the trained base NN. The models are then finetuned with the projection layer, allowing learning to focus on integrating rule enforcement rather than re-learning the underlying prediction task. 

All experiments were implemented in \texttt{Julia}. Neural network architectures and training algorithms were implemented using \texttt{Flux.jl}~\cite{Flux.jl-2018}. Differentiable projection layers were realized via LP-based projection operators implemented through the QP interface of \texttt{DiffOpt.jl}~\cite{besancon2023diffopt}, which performs implicit differentiation using KKT conditions. All methods were evaluated under a fixed configuration without per-method hyperparameter tuning.

% \cl{Describe all the baselines in a bit more detail. Addressed}

\paragraph{Metrics.}
We report predictive performance (MSE for the synthetic task; macro-F1 for scRNA-seq) and \emph{constraint satisfaction} (CSAT), defined as the fraction of samples whose \emph{active} input-dependent rules are satisfied by the final model outputs. For the scRNA CSAT, we only consider samples that have no contradicting active rules. Results are reported as mean $\pm$ std across 3 random runs.

\subsection{Synthetic cooling-control problem}

\paragraph{Task and data.}
The task models a synthetic cooling control dataset.
The model predicts continuous control actions (fan speed, chiller usage, pump power) based on environmental and operational inputs, subject to global operating constraints and multiple simultaneously active disjunctive safety/operational rules. Ground-truth targets are produced by an oracle Quadratic Program (QP) that also enumerates feasible rule-disjunct assignments for the active rules and selects a minimum-cost solution. The NN aims to approximate the QP objective. We then add small perturbations to the ground truth obtained to mimic measurement variability and randomness. The train and \emph{IID} test set are derived from the same distribution while the \emph{OOD} test set is generated from a shifted distribution. Additional details about the rules and test set distributions are mentioned in Appendix \ref{app:syn_details}

% \cl{Specify the details of your experiments in the appendix (hyperparameters) ADDRESSED}

% \paragraph{$\alpha$-mixing.}
% As discussed in Section \ref{sec:prac_eff_improve}, We vary the convex-combination parameter $\alpha$ to interpolate between gradients from the prediction loss and the KKT-based gradients. We report results by varying $\alpha$ for three training set sizes (25, 100, and 500 samples respectively). As shown in Fig.~\ref{fig:toy_alpha}, both the \emph{IID} and \emph{OOD} test sets benefit from a mix of loss and KKT gradients. Around $\alpha=0.25$, we observe the best predictive performance for both test sets as well as both CNF and DNF models. The trend is also consistent across the three different training set sizes illustrated by the different dashed lines. Based on this empirical result, we fix $\alpha=0.25$ for all projection models for both the case studies. The quantitative results are shown in the Appendix \ref{app:syn_details} Table \ref{tab:alpha_ablation}.

\begin{figure*}[t]
    \centering
    \includegraphics[width=\linewidth]{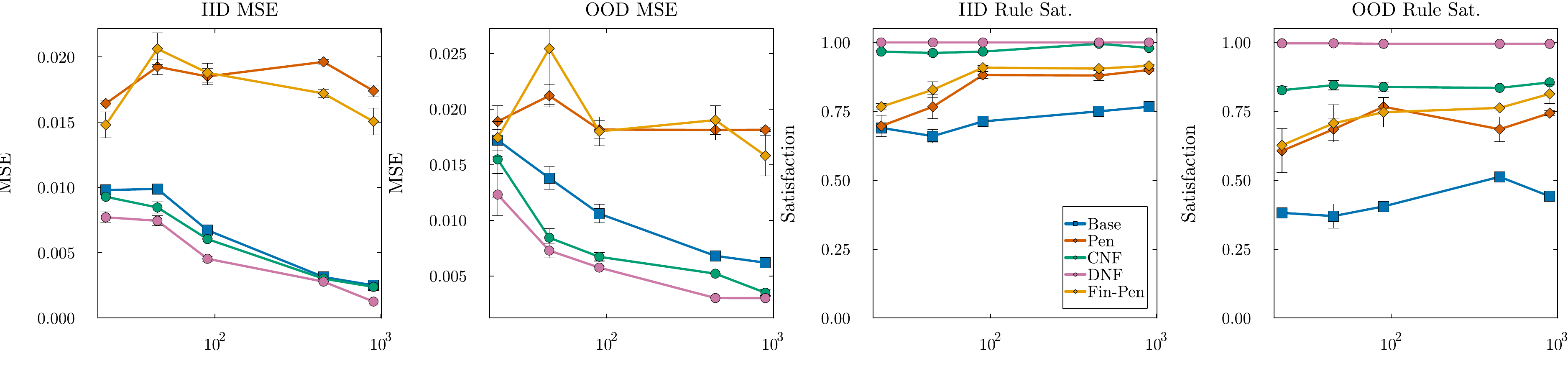}
    \caption{\textbf{Synthetic task: Dataset size.} Performance of different methods for increasing amounts of training data. Projection-based methods (CNF, DNF) achieve substantially lower MSE and higher rule satisfaction than the base and penalty-based baselines. DNF achieves complete constraint satisfaction across both IID and OOD test sets.}
    \label{fig:toy_dataset_four}
\end{figure*}

\paragraph{Sequential convexification.}
We perform this experiment, to study the constraint tightness as we sequentially convexify (Section. \ref{sec:eqconvex}) the feasible set by performing repeated basic steps. We start with the CNF and increase the number of rules considered in the DNF expansion. We obtain the DNF relaxation when convexifying all the rules ($7$ here). We perform this experiment for two training set sizes ($n=250, 500$) and report the CSAT and MSE for both \emph{IID} and \emph{OOD} test sets.

As shown in Fig. ~\ref{fig:toy_convex_four}, with sequential convexification, CSAT improves monotonically and rapidly approaches DNF-level feasibility in both \emph{IID} and \emph{OOD} settings, demonstrating that expanding even a small number of disjunctions captures most of the logical structure required for rule satisfaction. Additionally, most samples do not necessarily have all possible rules as active. 
MSE exhibits improvement in predictive performance with convexification, and similar to CSAT, plateaus as we move towards DNF. These results highlight a practical accuracy-scalability trade-off: a suitable sequential scheme can recover most of the feasibility benefits of DNF at substantially lower model complexity and computational overhead. It is important to note here that choosing which rules to convexify is an open challenge. Here, we simply perform convexification on a fixed rank order.

\paragraph{Dataset-size scaling.}
Finally, we evaluate how our method compares to the baselines as the amount of training data varies. We aim to answer when satisfying input-dependent logical constraints provides the greatest benefit in predictive performance. All methods are trained with increasing numbers of training samples and evaluated on fixed \emph{IID} and \emph{OOD} test sets.

Figure~\ref{fig:toy_dataset_four} shows MSE and CSAT as functions of training set size. Since lower MSE is better, both CNF and DNF projection layers substantially outperform the unconstrained base model, the standard penalty baseline, and the finetuned penalty baseline across training set sizes. The improvement is especially pronounced under the OOD shift, where incorporating the rules through projection yields much lower MSE than either unconstrained learning or soft penalty-based enforcement. This suggests that the projection layer provides a strong inductive bias that improves generalization, rather than merely enforcing feasibility at inference time.

In terms of CSAT, DNF achieves complete constraint satisfaction across both IID and OOD test sets, while CNF maintains consistently high satisfaction. The base and penalty-based methods improve modestly with additional data, but remain substantially below the projection-based methods, particularly in the OOD setting. The finetuned penalty baseline benefits from the same pretrained initialization as the projection models, but still does not reliably enforce the constraints. These results show that differentiable projection layers are particularly effective when rules encode structural knowledge that is difficult to learn from data alone: they achieve both stronger predictive performance and substantially higher rule satisfaction than unconstrained or penalty-based training.

\paragraph{Computational overhead.}
Table~\ref{tab:main-compute} summarizes the projection-layer overhead on the synthetic cooling task. The unconstrained and penalty models have negligible inference overhead, while the CNF and DNF projection layers require solving an LP at inference time. Even with multiple active rules, both formulations remain practical in this setting: CNF and DNF take 25.03 ms and 28.62 ms per sample, respectively. DNF produces larger LPs because it enumerates intersections of active disjuncts, but the realized size is often much smaller than the worst-case exponential bound because many disjunct combinations are infeasible or inactive. A more detailed breakdown of inference time and LP sizes, including statistics for samples with one or more active rules, is provided in Appendix~\ref{app:compute-cost}.

\begin{table}
\centering
\caption{Computational overhead on the synthetic cooling task. Inference time is measured per sample. LP sizes are reported over samples with at least two active rules.}
\label{tab:main-compute}
\begin{tabular}{lccc}
\toprule
Model & Inf time & Variables & Constraints \\
\midrule
Base & $<1\,\mu\mathrm{s}$ & -- & -- \\
Penalty & $<1\,\mu\mathrm{s}$ & -- & -- \\
CNF & $25.03\,\mathrm{ms}$ & $37.50 \pm 6.54$ & $137.28 \pm 26.29$ \\
DNF & $28.62\,\mathrm{ms}$ & $44.39 \pm 14.86$ & $160.53 \pm 62.69$ \\
\bottomrule
\end{tabular}
\end{table}

\subsection{Case study: scRNA-seq classification}

\paragraph{Task and rules.}
Next, we consider the application of the proposed method to a classification task. Single-cell RNA sequencing (scRNA-seq) is a biomedical task that uses gene expression counts $x$ to classify cell types $y$. Such data are expensive to generate and inherently high-dimensional and sparse. We perform scRNA cell-type classification using the PBMC3k dataset, a standard benchmark consisting of peripheral blood mononuclear cells annotated into discrete cell types. The task is to predict class probabilities for each cell $y \in \Delta^8$, given normalized gene expression features.
Domain knowledge is encoded through input-dependent marker-gene rules of the form
\begin{equation}
G_r(u) \ge \tau_r \;\Rightarrow\; \bigvee_{c \in \mathcal{C}_r} \, y_c \ge \rho_r,
\end{equation}
where $G_r(u)$ is a linear function of gene expression, $\mathcal{C}_r$ denotes a set of candidate cell types associated with the marker, and $\tau_r, \rho_r$ are fixed thresholds. When the antecedent is satisfied, the rule enforces that at least one associated cell type must receive sufficiently high predicted probability.  In addition to the baselines defined before, we also run a rules baseline, that randomly selects a feasible disjunction from the active rules. This acts as an indication of the quality and accuracy of the rules.

Marker-gene rules may be noisy or partially contradictory: for a given sample, multiple active rules can induce constraints whose feasible sets have an empty intersection. In such cases, no output can simultaneously satisfy all active rules; when this occurs, we bypass the projection and pass the NN output and gradients through unchanged to preserve well-defined training dynamics. For the CSAT reporting, we exclude those samples as our aim is to understand how often the different methods can satisfy the original constraints whenever possible.

\paragraph{Results.}
Figure~\ref{fig:pbmc_two_panel} reports macro-F1 (top) and CSAT (bottom) as functions of training set size. The exact quantitative results are provided in Appendix~\ref{app:pbmc_tables}. Projection layers substantially improve rule satisfaction compared to the base network and penalty-based baselines across all training set sizes. DNF-based projections achieve the highest rule satisfaction, while CNF also significantly improves satisfaction over unconstrained and penalty-based training, though at a consistently lower rate than DNF.

In terms of accuracy, the projection layers are most beneficial in the low-data regime, where incorporating biologically motivated marker-gene rules provides a useful inductive bias and improves macro-F1 relative to the base model. As the training set grows, the base model becomes increasingly competitive and achieves the highest F1, indicating that sufficient labeled data can partially compensate for missing structural knowledge in terms of predictive accuracy. However, this improved accuracy comes with substantially lower rule satisfaction compared to the projection-based models. The finetuned penalty model improves over the standard penalty baseline in rule satisfaction, but remains below CNF and DNF, illustrating that soft penalties do not reliably enforce the rules even when initialized from the same pretrained base model. The rules-only baseline performs substantially worse in F1 despite high rule satisfaction, indicating that rules alone do not suffice for accurate classification and must be combined with data-driven learning. Overall, the projection methods are most useful when data are scarce or when satisfying the constraints is non-negotiable, while revealing a clear trade-off between predictive accuracy and exact rule adherence in higher-data regimes.

% \cl{Should we remove these samples or relax the rules? Or maybe in the table when you report the percentage only consider the non conflicting rules. For a reviewer who does not read your paper line by line, they could get confused. It's better to have DNF being 100\% in the table. Added rulesbaseline which shows the max possible rule satisfaction ADDRESSED}

% \cl{Try to make the figures adjacent to the text where they are referred to. Also, refer to the tables in the appendix by their table number and appendix number. ADDRESSED}

\begin{figure}[t]
    \centering
    \begin{minipage}[t]{0.4\textwidth}
        \centering
        \includegraphics[width=\linewidth]{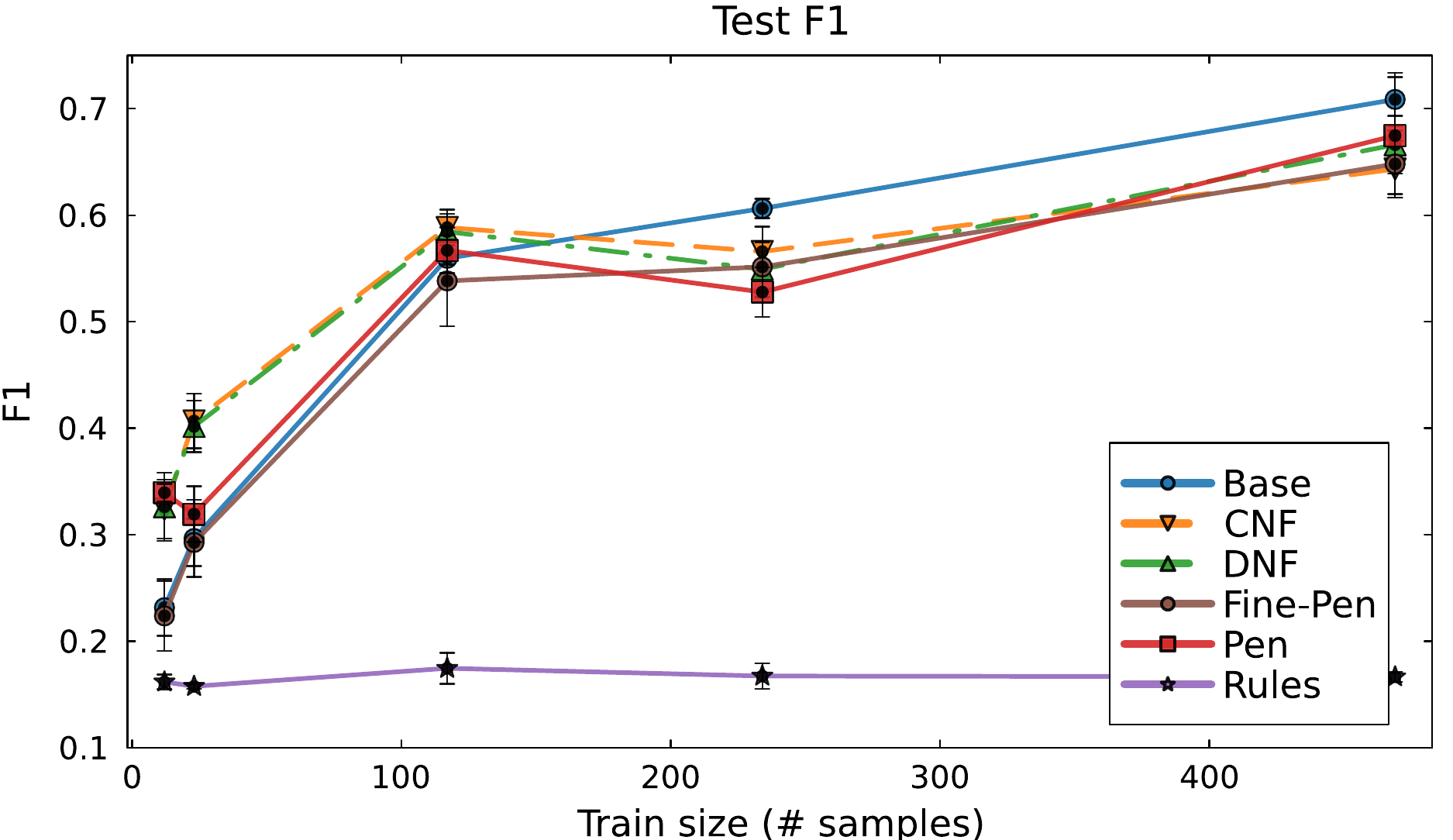}
    \end{minipage}\hfill
    \begin{minipage}[t]{0.4\textwidth}
        \centering
        \includegraphics[width=\linewidth]{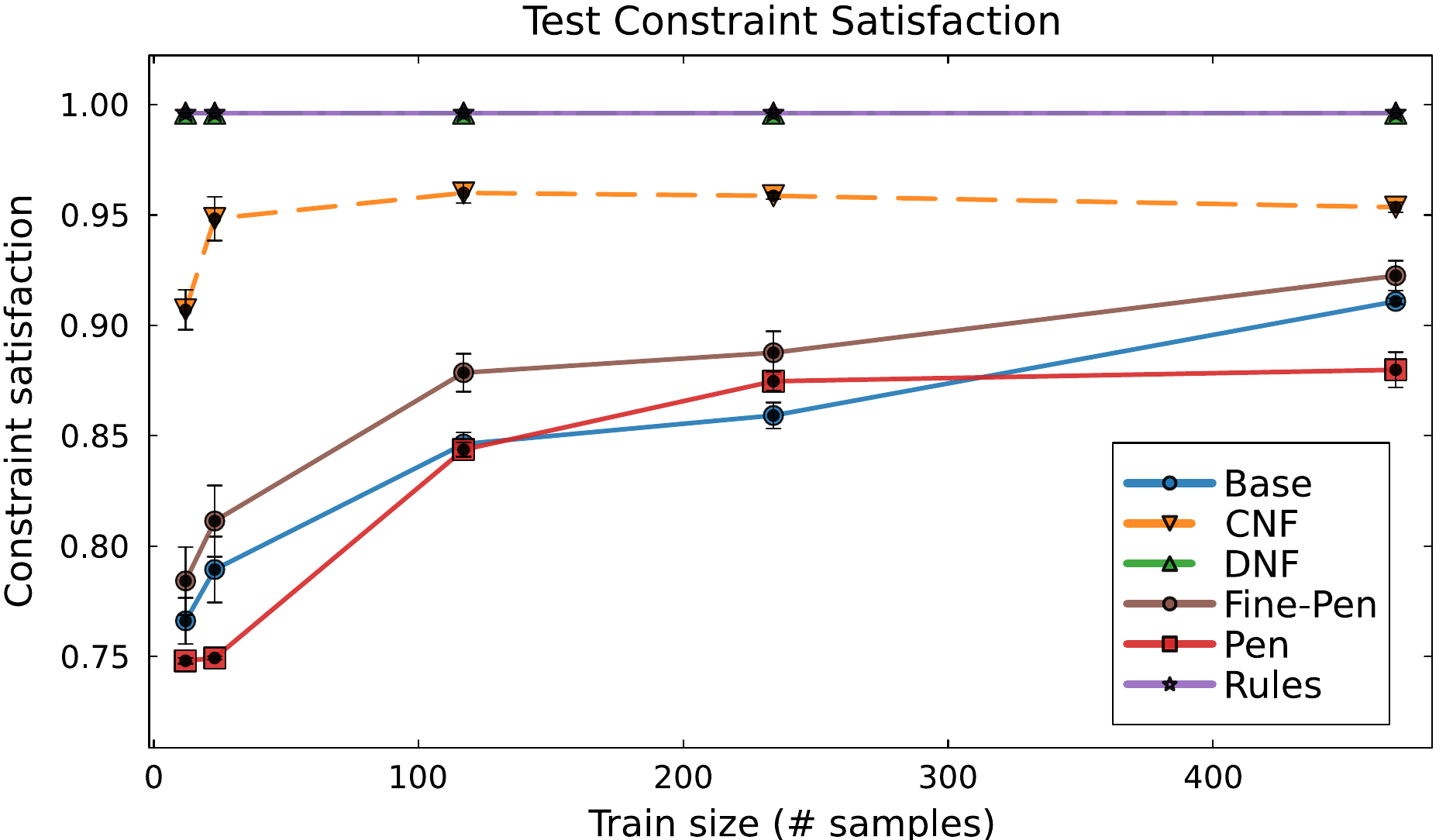}
    \end{minipage}
    \caption{\textbf{PBMC3k scRNA-seq results vs.\ training set size (test set).}
    \textbf{Top:} F1. Enforcing rules via projection improves performance in low-data regimes; tighter DNF-based projections can trade a small amount of F1 for stronger rule adherence.
    \textbf{Bottom:} sample-level rule satisfaction. DNF reaches 100\%, indicating that essentially all samples with \emph{non-contradictory} active rules are satisfied.}
    \label{fig:pbmc_two_panel}
\end{figure}

\section{Conclusions}

We presented a projection-based framework for enforcing input-dependent mixed logical-numeric constraints in neural networks, providing exact feasibility guarantees during both training and inference whenever the constraint system is feasible. The framework supports general MILP-representable constraints and logical implications by reformulating them as disjunctive structures, and integrates naturally into end-to-end learning without relying on surrogate penalties or post-hoc correction. The framework is model-agnostic and can seamlessly fit into other neural architectures. Empirical results on a synthetic cooling problem and scRNA-seq classification with marker-gene rules demonstrate that projection layers substantially improve rule adherence and predictive performance in low-data regimes, while revealing a principled trade-off between constraint tightness and accuracy.

Several directions remain for future work. A key extension is to incorporate convex nonlinear constraints within disjuncts, enabling richer domain knowledge while preserving the logical structure of the rules. This would also allow extending the $\ell_1$-norm projection problem to the Euclidean norm, leveraging results from the nonlinear disjunctive programming literature \cite{ceria1999convex}. In addition, the trade-off between computational complexity and constraint satisfaction for CNF and DNF formulations warrants further study. To this end, improved sequential convexification heuristics, such as those proposed in \cite{Li2019}, may enable adaptive transition between CNF and partial DNF formulations during neural network training.

\paragraph{Limitations}
A primary limitation of our framework is computational. While the CNF relaxation scales linearly with the number of active rules, the DNF relaxation can grow exponentially because it enumerates combinations of disjuncts across rules. In practice, this motivates the use of CNF or partial-DNF formulations when the active rule set becomes large. Empirical stress-test results for larger active rule sets, together with comparisons to rule counts in representative neuro-symbolic benchmarks, are reported in Appendix~\ref{app:scalability}. A second limitation is that the exact-satisfaction guarantee for DNF depends on the solver returning optimal extreme-point solutions of the lifted LP, as is the case for simplex-based solvers and barrier methods with crossover. Finally, in applications with noisy, misspecified, or contradictory rules, additional conflict-handling mechanisms may be required. We therefore view this framework as most appropriate for settings where the rule base is well validated and potential conflicts in domain knowledge can be carefully monitored. 

\newpage
\section*{Impact Statement}
This paper presents work whose goal is to advance the field of machine learning. There are many potential societal consequences of our work, none of which we feel must be specifically highlighted here.

\section*{Acknowledgements}
C.L. would like to acknowledge the financial support from NSF awards CBET-
2441184, DMS-2424004, ONR award N000142412641. S.P. was supported by the NSF award DMS-2424004.

\bibliography{example_paper}
\bibliographystyle{icml2026}

%%%%%%%%%%%%%%%%%%%%%%%%%%%%%%%%%%%%%%%%%%%%%%%%%%%%%%%%%%%%%%%%%%%%%%%%%%%%%%%
%%%%%%%%%%%%%%%%%%%%%%%%%%%%%%%%%%%%%%%%%%%%%%%%%%%%%%%%%%%%%%%%%%%%%%%%%%%%%%%
% APPENDIX
%%%%%%%%%%%%%%%%%%%%%%%%%%%%%%%%%%%%%%%%%%%%%%%%%%%%%%%%%%%%%%%%%%%%%%%%%%%%%%%
%%%%%%%%%%%%%%%%%%%%%%%%%%%%%%%%%%%%%%%%%%%%%%%%%%%%%%%%%%%%%%%%%%%%%%%%%%%%%%%
\newpage
\appendix
\clearpage
\onecolumn
\section{Detailed comparison with related works}\label{app:litreview}

We provide a clear comparison of the representative cited works in the related works section in Table \ref{tab:comparison}. 
\begin{itemize}
    \item \textbf{Logic} Denotes whether a method can represent logical decisions. This includes Boolean variables studied in the neuro-symbolic AI literature as well as binary variables in approaches that embed mixed-integer programming problems as layers.
    \item \textbf{Linear} Denotes whether a method can embed general linear constraints of the form $Ax \leq b$. Methods that rely on activation functions to enforce only specialized classes of linear inequalities are not checked.
    \item \textbf{Mix} Denotes whether a method can represent constraints involving both logical and continuous variables, such as constraints that are representable by mixed-integer linear programs.
    \item \textbf{Hard} Denotes whether the method provides theoretical guarantees of exact constraint satisfaction during both training and inference.
    \item \textbf{Input-dep} Denotes whether a method can enforce constraints that depend on the input to the neural network.
\end{itemize}

\begin{table}[h]
\centering
\begin{tabular}{lcccccc}
\toprule
\textbf{Method} & \textbf{Logic} & \textbf{Linear} & \textbf{Mix} & \textbf{Hard} & \textbf{Input dep} \\
\midrule
OptNet \cite{Amos2017}  
& \texttimes & \checkmark & \texttimes & \checkmark & \checkmark \\

CVXPYlayers \cite{agrawal2019differentiable} & \texttimes & \checkmark & \texttimes & \checkmark & \checkmark\\

DC3 \cite{Donti2021} 
& \texttimes & \checkmark & \texttimes & \texttimes & \checkmark \\

Primal dual \cite{park2022selfsupervisedprimalduallearningconstrained} 
& \texttimes & \checkmark & \texttimes & \texttimes & \checkmark \\

KKTHardNet \cite{iftakher2025physics} 
& \texttimes & \checkmark & \texttimes & \checkmark &   \checkmark \\

DecisionRuleNet \cite{constanteflores2025enforcinghardlinearconstraints} 
& \texttimes & \checkmark & \texttimes & \checkmark & \checkmark \\

HardNet-Aff \cite{min2024hardnet}
& \texttimes & \checkmark & \texttimes & \checkmark & \checkmark \\

Homeomorphic projection \cite{Liang2024} & \texttimes & \checkmark & \texttimes & \checkmark & \checkmark \\

FSNet \cite{nguyen2025fsnet} & \texttimes & \checkmark & \texttimes & \checkmark & \checkmark\\

Semantic Loss \cite{xu2018semantic} 
& \checkmark & \checkmark & \texttimes & \texttimes & \texttimes \\

DL2 \cite{pmlr-v97-fischer19a} 
& \checkmark & \checkmark & \checkmark & \texttimes & \checkmark \\

LTN \cite{Badreddine_2022} 
& \checkmark & \texttimes & \checkmark & \texttimes & \checkmark \\

DeepProbLog \cite{manhaeve2018deepproblog} & \checkmark & \checkmark & \texttimes & \texttimes & \checkmark  \\

MultiplexNet \cite{hoernle2022multiplexnet} 
& \checkmark & \texttimes & \checkmark & \checkmark & \texttimes \\

Straight through estimator \cite{sahoo2023backpropagation} & \checkmark & \checkmark & \checkmark & \checkmark & \texttimes \\
SATNet \cite{wang2019satnetbridgingdeeplearning} 
& \checkmark & \texttimes & \texttimes & \texttimes & \texttimes \\

LP relaxation of ILP \cite{wilder2019melding} 
& \checkmark & \checkmark & \checkmark & \texttimes & \texttimes \\

MIPaaL \cite{Ferber_Wilder_Dilkina_Tambe_2020} 
& \checkmark & \checkmark & \checkmark & \texttimes & \texttimes \\

DYS-Net \cite{mckenzie2024differentiating} & \checkmark & \checkmark & \checkmark & \texttimes & \texttimes \\

Perturbed optimizer \cite{berthet2020learning} 
& \checkmark & \checkmark & \checkmark & \checkmark & \texttimes \\

CombOptNet \cite{paulus2021comboptnet} 
& \checkmark & \checkmark & \checkmark & \texttimes & \texttimes \\

ILP inference \cite{roth2005integer} 
& \checkmark & \checkmark & \checkmark & \texttimes & \checkmark \\
DRL \cite{stoian2025beyond}
& \checkmark & \checkmark & \checkmark & \checkmark & \checkmark \\

Ours 
& \checkmark & \checkmark & \checkmark & \checkmark & \checkmark \\
\bottomrule
\end{tabular}
\caption{Comparison of constraint-aware learning methods.}
\label{tab:comparison}
\end{table}

\section{Proof of propositions and theorems}

\subsection{Proof of Proposition \ref{prop:MILPequivalent}}\label{app:MILPequiv}

\begin{proof}
Fix $x$ and abbreviate $A_{rj}=A_{rj}(x)$ and $b_{rj}=b_{rj}(x)$. Consider a single active rule $r$ with
\(
\mathcal{C}_r(x)=\bigcup_{j=1}^{m_r}\mathcal{S}_{rj}(x)
\)
and
\(
\mathcal{S}_{rj}(x)=\{y\in\mathbb{R}^d: A_{rj}y\le b_{rj}\}.
\)
Assume $y$ is constrained to a known bounded box $\mathcal{B}=\{y:\ell\le y\le u\}$.

Introduce binary variables $\delta_{rj}\in\{0,1\}$ for $j=1,\dots,m_r$ with $\sum_{j=1}^{m_r}\delta_{rj}=1$. For each inequality row $i$ in $A_{rj}y\le b_{rj}$, choose a constant $M_{rji}$ valid over $y\in\mathcal{B}$ such that
\[
(A_{rj}y)_i \le (b_{rj})_i + M_{rji}
\qquad\text{for all }y\in\mathcal{B}.
\]
Then the disjunctive constraint $y\in\mathcal{C}_r(x)$ is equivalent (over $y\in\mathcal{B}$) to the mixed-integer linear system
\[
\sum_{j=1}^{m_r}\delta_{rj}=1,\qquad \delta_{rj}\in\{0,1\}\ \ \forall j,
\]
\[
(A_{rj}y)_i \le (b_{rj})_i + M_{rji}\,(1-\delta_{rj}),
\qquad \forall j\in\{1,\dots,m_r\},\ \forall i.
\]
If $\delta_{rj}=1$ for some $j$, the corresponding constraints reduce to $A_{rj}y\le b_{rj}$, i.e., $y\in\mathcal{S}_{rj}(x)$; if $\delta_{rj}=0$, the constraints are relaxed and become redundant on $\mathcal{B}$ by construction of $M_{rji}$. Conversely, if $y\in\mathcal{S}_{rj}(x)$ for some $j$, setting $\delta_{rj}=1$ and $\delta_{rj'}=0$ for $j'\neq j$ yields feasibility. This is the standard disjunctive-programming (big-$M$) formulation for finite unions of polyhedra on bounded domains; see, e.g., \citet{balas2018disjunctive,JEROSLOW1987223}.

To represent $\mathcal{F}(x)=\bigcap_{r\in\mathcal{R}(x)}\mathcal{C}_r(x)$, we introduce an independent selector vector $\delta_r:=(\delta_{r1},\dots,\delta_{rm_r})$ for each active rule $r\in\mathcal{R}(x)$ and impose the above system simultaneously for all such $r$, sharing the same decision variable $y$. The resulting constraint system is a single MILP in variables $(y,\{\delta_r\}_{r\in\mathcal{R}(x)})$ whose projection onto the $y$-coordinates is exactly $\mathcal{F}(x)$. The total number of binary variables is $\sum_{r\in\mathcal{R}(x)} m_r$, one per disjunct in each active rule.
\end{proof}

\subsection{Proof of Theorem \ref{thm:qf_informal}}\label{app:qflraequiv}

\begin{theorem}[Quantifier-free linear-atomic formulas yield finite unions of polyhedra]
\label{thm:qf_subset_app}
Let $\varphi(x,y)$ be a quantifier-free formula with free variable $y\in\mathbb{R}^d$,
built from linear inequalities in $y$ whose coefficients may depend on $x$,
together with Boolean connectives $(\wedge,\vee,\neg)$.
Assume predicates are drawn from the class
\[
a(x)^\top y \le b(x)
\quad\text{and}\quad
a(x)^\top y \ge b(x),
\]
for given functions $a:\mathcal{X}\to\mathbb{R}^d$ and $b:\mathcal{X}\to\mathbb{R}$.
Let $\mathcal{Y}\subseteq\mathbb{R}^d$ denote the (bounded) output domain.

Then for every $x\in\mathcal{X}$ there exists an integer $m\ge 1$ and a consequent set
\[
\mathcal{C}(x) \;=\; \bigcup_{j=1}^m \mathcal{S}_j(x),
\qquad
\mathcal{S}_j(x)=\{\,y\in\mathcal{Y}:A_j(x)y\le b_j(x)\,\},
\]
such that
\[
\{\,y\in\mathcal{Y}:\varphi(x,y)\,\} \;=\; \mathcal{C}(x).
\]
\end{theorem}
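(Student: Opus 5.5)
Fix $x\in\mathcal{X}$; since coefficients are then constants, $\varphi(x,\cdot)$ is an ordinary quantifier-free formula over linear atoms in $y$. The plan is to put $\varphi$ into disjunctive normal form and read each conjunctive clause as a polyhedron. I will use the fact that the atom class is closed under negation up to strictness, and handle the strict inequalities that negation produces.

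\textbf{Step 1 (negation normal form).} By De Morgan's laws and double-negation elimination I push all negations down to the atoms, a routine induction on the structure of $\varphi$. The result is an equivalent formula built with $\wedge,\vee$ from literals of the forms $a^\top y\le b$, $a^\top y\ge b$, $\neg(a^\top y\le b)\equiv a^\top y> b$, and $\neg(a^\top y\ge b)\equiv a^\top y<b$.

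\textbf{Step 2 (DNF).} Distributing $\wedge$ over $\vee$, again by structural induction, gives an equivalent formula $\bigvee_{j=1}^m \bigwedge_{i\in I_j}\ell_{ji}$ with finitely many clauses, where $m\ge1$. If $\varphi$ is unsatisfiable, I take a single empty clause, for instance the inequality $0^\top y\le -1$. Each clause defines the set $\{y\in\mathcal{Y}:\ell_{ji}\ \forall i\}$, and the solution set of $\varphi$ is the union of these sets. Every non-strict literal is already a row of $A_j(x)y\le b_j(x)$, after writing $a^\top y\ge b$ as $-a^\top y\le -b$. The output domain $\mathcal{Y}$ is taken to be a bounded polyhedron, e.g.\ a box, and its defining rows are appended to each clause.

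\textbf{Step 3 (strict inequalities).} This is the main obstacle, since a set such as $\{y: a^\top y>b\}$ is not closed and therefore not a polyhedron of the form $\{A y\le b\}$. The exact statement holds when every literal is non-strict. In general I would argue in one of two ways.

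One option is to adopt the convention common in QF-LRA encodings for optimization that strict inequalities are replaced by $a^\top y\ge b+\epsilon$ for a fixed tolerance $\epsilon>0$. This yields the polyhedral representation $\{y\in\mathcal{Y}:\varphi_\epsilon(x,y)\}$, which approximates the solution set of $\varphi$.

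The other option is to interpret the predicate class as stated, with negations of atoms read as the complementary closed atom: $\neg(a^\top y\le b)$ is taken as $a^\top y\ge b$. The two readings differ only on the hyperplane $a^\top y=b$, a measure-zero boundary. Equality then holds up to closure, because each clause set is then the closure of the original clause set whenever the original is nonempty.

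I would state whichever reading the formal theorem intends, most likely the closed one, since the theorem lists only $\le$ and $\ge$ predicates. I would remark that under it the equality is exact. Finally I would note that $m$ may be exponential in the size of $\varphi$, which is consistent with the DNF blow-up discussed in the Scalability-tightness trade-off remark, but that $m$ is always finite.
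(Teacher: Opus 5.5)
Your Steps 1 and 2 match the paper's proof. Both pass to negation normal form via De Morgan, distribute into DNF, and read each conjunctive clause as $\{y\in\mathcal{Y}:A_j(x)y\le b_j(x)\}$ after writing $a^\top y\ge b$ as $-a^\top y\le -b$. The real difference is Step 3, and there you are more careful than the paper. Its proof asserts $\neg(a^\top y\le b)\equiv a^\top y\ge b$ as a logical equivalence, which is false, so it silently adopts your closed reading. No better argument can repair this, because under the standard semantics the statement is false. For $d=1$, $\mathcal{Y}=[-1,1]$ and $\varphi=\neg(y\le 0)$, the solution set is $(0,1]$, which is not closed, whereas any finite union of sets $\{y\in\mathcal{Y}:A_jy\le b_j\}$ is closed. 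Stating the convention explicitly, as you plan, is therefore necessary, and under it your argument (like the paper's) is complete.

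Your side remarks on the closed reading need tightening.

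\emph{The clause-level claim is right.} If a clause is nonempty under the strict semantics and $\mathcal{Y}$ is closed and convex (your box), then its closed version is its closure. To see this, join any point of the closed version to a point of the strict clause.

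\emph{The formula-level claim is not.} A clause that is empty under the strict semantics can become nonempty. For example, $(y\le 0)\wedge\neg(y\le 0)$ turns into $\{0\}$, so the closed-reading set may strictly contain the closure of the true solution set. For the same reason, the closed reading depends on how $\varphi$ is written, not only on its solution set.

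\emph{The ``measure-zero'' claim also fails} in the degenerate case $a(x)=0$, $b(x)=0$, which the $x$-dependence allows. There $\neg(a^\top y\le b)$ is false everywhere, while $a^\top y\ge b$ is true everywhere.

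If you want an exact statement that is semantic, prove instead that the \emph{closure} of $\{y\in\mathcal{Y}:\varphi(x,y)\}$ is a finite union of polyhedra. Delete the DNF clauses that are empty under the strict semantics, replace strict by non-strict inequalities in the remaining ones, and use $0^\top y\le -1$ if no clause survives.
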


\begin{proof}
Let $x\in\mathcal{X}$ be arbitrary and treat all coefficients evaluated at $x$ as constants.
Convert $\varphi(x,y)$ to \emph{negation normal form}, so that $\neg$ appears only
immediately in front of predicates, using only De Morgan's laws and $\neg\neg\psi\equiv\psi$.

Because terms include both $\le$ and $\ge$ (standard optimization form can flip from one to another), every negated term can be rewritten as an unnegated term
within the same class:
\[
\neg(a^\top y \le b)\ \equiv\ a^\top y \ge b,
\qquad
\neg(a^\top y \ge b)\ \equiv\ a^\top y \le b.
\]
Hence in NNF, $\varphi$ is equivalent to a formula using only $(\wedge,\vee)$ over (unnegated) linear atoms.

By Boolean normalization, $\varphi$ is equivalent to a logical disjunctive normal form:
there exists a finite index set $J$ such that
\[
\varphi(x,y)\ \equiv\ \bigvee_{j\in J}\ \bigwedge_{\ell\in L_j}\psi_{j\ell}(x,y),
\]
where each $\psi_{j\ell}(x,y)$ is a linear inequality in $y$ (either $\le$ or $\ge$).

For each $j\in J$, define the set
\[
\mathcal{S}_j(x)
:=
\Big\{\,y\in\mathcal{Y}:\ \bigwedge_{\ell\in L_j}\psi_{j\ell}(x,y)\,\Big\}.
\]
Each $\mathcal{S}_j(x)$ is the intersection of $\mathcal{Y}$ with finitely many halfspaces, hence is a polyhedron.
Moreover, any inequality $a^\top y \ge b$ can be rewritten as $(-a)^\top y \le -b$,
so $\mathcal{S}_j(x)$ can be expressed in standard form
\[
\mathcal{S}_j(x)=\{\,y\in\mathcal{Y}:A_j(x)y\le b_j(x)\,\}.
\]

Finally, since $\varphi$ is the disjunction over $j\in J$,
\[
\{\,y\in\mathcal{Y}:\varphi(x,y)\,\}
=
\bigcup_{j\in J}\mathcal{S}_j(x).
\]
Setting $\mathcal{C}(x):=\bigcup_{j\in J}\mathcal{S}_j(x)$ yields the claim.
\end{proof}

\subsection{Proof of Theorem~\ref{thm:dnf_exact}}\label{app:dnfexact}

\begin{theorem}[Vertex exactness of the DNF convex-hull projection]\label{thm:dnf_exact_app}
Let $x\in\mathcal{X}$ and $\hat y\in\mathbb{R}^d$ be given, and let $w:=(y,\eta)$ denote the lifted variables.
Assume the lifted rule-feasible set can be written in DNF-union form
\begin{equation}
\widehat{\mathcal{F}}(x;\hat y)
=
\bigcup_{k\in\mathcal{K}(x)} \widehat{\mathcal{T}}_{k}(x;\hat y),
\qquad
\widehat{\mathcal{T}}_{k}(x;\hat y)\ \text{polyhedral in $w$ for each $k$},
\end{equation}
where $\mathcal{K}(x)$ is a finite index set (e.g., $\mathcal{K}(x)=\prod_{r\in\mathcal{R}(x)}[m_r]$).
Let
\[
\widetilde{\mathcal{F}}_{\mathrm{DNF}}(x;\hat y)
:=
\operatorname{conv}\!\big(\widehat{\mathcal{F}}(x;\hat y)\big).
\]
Consider the DNF projection LP
\begin{equation}\label{eq:dnf_proj_lp_app}
\min_{w=(y,\eta)} \ \mathbf{1}^\top \eta
\quad \text{s.t.}\quad
w \in \widetilde{\mathcal{F}}_{\mathrm{DNF}}(x;\hat y),
\end{equation}
and assume \eqref{eq:dnf_proj_lp_app} is feasible and bounded.

Then every \emph{optimal extreme point} $w^\star=(y^\star,\eta^\star)$ of \eqref{eq:dnf_proj_lp_app}
satisfies
\[
w^\star \in \widehat{\mathcal{F}}(x;\hat y),
\]
i.e., there exists $k^\star\in\mathcal{K}(x)$ such that $w^\star\in \widehat{\mathcal{T}}_{k^\star}(x;\hat y)$.
Consequently, the output component $y^\star$ satisfies all active rule constraints (equivalently, $y^\star\in\mathcal{F}(x)$ after removing
lifted/global constraints such as the $\ell_1$ epigraph).
In particular, if an LP solver returns an optimal extreme point (e.g., simplex), the returned projection satisfies all rules whenever the original lifted set is nonempty.
\end{theorem}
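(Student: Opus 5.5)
The proof will reduce to a purely convex-geometric fact: an extreme point of the convex hull of a finite union of sets must lie in one of those sets. After that I will check that the LP the solver actually handles, which is the extended formulation of Proposition~\ref{prop:conv_hull_union}, returns points of this kind.

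First, the boundedness issue. Proposition~\ref{prop:conv_hull_union} needs bounded polyhedra, but $\widehat{\mathcal{T}}_k(x;\hat y)$ is unbounded in $\eta$ because the epigraph constraints allow $\eta$ to grow. I will assume, as in Theorem~\ref{thm:qf_subset_app}, that $y$ lies in a bounded domain $\mathcal{Y}$, say $\ell\le y\le u$. Then every feasible point satisfies $|y-\hat y|$ bounded by some constant $U$. I will add the redundant bound $\eta\le U\mathbf{1}$ to every $\widehat{\mathcal{T}}_k$.

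I must check that this box does not cut off any optimal point. Any feasible point with some $\eta_i>U$ can lower $\eta_i$ to $|y_i-\hat y_i|\le U$ and stay in the same $\widehat{\mathcal{T}}_k$, while strictly reducing $\mathbf 1^\top\eta$. So all optimal points already satisfy $\eta\le U\mathbf 1$, and the optimal set is unchanged. After this truncation each $\widehat{\mathcal{T}}_k$ is a polytope, and Proposition~\ref{prop:conv_hull_union} applies with equality.

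Second, the core step, which works in $w$-space. Let $w^\star$ be an optimal extreme point of $\operatorname{conv}(\bigcup_k\widehat{\mathcal{T}}_k)$.
\begin{itemize}
\item By the definition of the convex hull, $w^\star=\sum_k\lambda_k v_k$ with $v_k\in\widehat{\mathcal{T}}_k$, $\lambda\ge0$ and $\sum_k\lambda_k=1$.
\item If two indices had $\lambda_k>0$ with $v_k\neq w^\star$, then $w^\star$ would be a proper convex combination of two distinct hull points. One is $v_k$; the other is the renormalized combination of the remaining terms.
\item That contradicts extremality, so $w^\star=v_{k^\star}\in\widehat{\mathcal{T}}_{k^\star}$ for some $k^\star$.
\end{itemize}
Since $\widehat{\mathcal{T}}_{k^\star}$ includes all the active-rule inequalities $A_{r,k_r}y\le b_{r,k_r}$, the component $y^\star$ lies in $\mathcal{F}(x)$. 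Dropping the $\eta$ and global constraints gives the claim.

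Third, the step I expect to be the main obstacle: linking this to what a simplex solver returns. The solver returns a vertex of the lifted polytope in the variables $(w,\{w_k\},\lambda)$, not an extreme point of the hull in $w$-space, so extremality must be transferred. I will prove directly that at any vertex of the lifted polytope, $\lambda$ is a $0/1$ vector.
\begin{itemize}
\item Suppose $\lambda_a,\lambda_b\in(0,1)$. Write $w_k=\lambda_k v_k$ with $v_k\in\widehat{\mathcal{T}}_k$, which is possible because $\lambda_k>0$.
\item Perturb by $\lambda_a\pm\epsilon$, $\lambda_b\mp\epsilon$, $w_a\pm\epsilon v_a$, $w_b\mp\epsilon v_b$, and adjust $w$ accordingly.
\item Every constraint $A_kw_k\le\lambda_kb_k$ stays satisfied, since $A_kv_k\le b_k$ scales with $\lambda_k$. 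So the current point is the midpoint of two distinct feasible points, which is not a vertex.
\end{itemize}
This leaves one further case: a single $\lambda_{k^\star}=1$ but some $w_k\neq0$ for $\lambda_k=0$. Boundedness of each $\widehat{\mathcal{T}}_k$ then forces $A_kw_k\le0$, so $w_k=0$. Hence a vertex has $w=w_{k^\star}\in\widehat{\mathcal{T}}_{k^\star}$, and the result holds without going through the $w$-space hull at all. Combined with the second step, this covers both readings of ``extreme point solution'' in Theorem~\ref{thm:dnf_exact}.
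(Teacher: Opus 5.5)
Your proof is correct. Its core, your second step, is the same fact the paper uses: an extreme point of the convex hull of a finite union of convex sets lies in one of the sets. The paper gets this by citing Balas's Corollary~2.2, after writing the hull via his Theorem~2.1. It supports that with only a parenthetical appeal to boundedness, although each $\widehat{\mathcal{T}}_k$ is in fact unbounded in $\eta$. You prove the fact directly from the definition of the convex hull, which needs neither boundedness nor the extended formulation.

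The genuinely different part is your third step. The paper's argument covers only extreme points of the hull in $w$-space. Its closing remark then asserts, without proof, that a simplex vertex of the extended LP is covered ``as above.'' That transfer is not automatic, because a vertex of the lifted polyhedron in $(w,\{w_k\},\lambda)$ need not project to an extreme point of the hull. For example, take $\lambda_k=1$, let $w_k$ be a vertex of $\widehat{\mathcal{T}}_k$ lying in the relative interior of an edge of the hull, and set all other copies to zero. You show that every lifted vertex has $\lambda$ a $0/1$ vector and $w_j=0$ for the unselected copies, so $w=w_{k^\star}\in\widehat{\mathcal{T}}_{k^\star}$. This closes exactly that gap, and it is what actually justifies the solver-level sentence at the end of the theorem.

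The truncation $\eta\le U\mathbf{1}$ is a legitimate way to make Proposition~\ref{prop:conv_hull_union} hold with equality. It is not needed for the rule-satisfaction claim, though:
\begin{itemize}
\item your second step already works on the untruncated hull;
\item $w_j=0$ whenever $\lambda_j=0$ follows because $A_jw_j\le 0$ is preserved under $w_j\mapsto(1\pm\epsilon)w_j$, so a nonzero $w_j$ contradicts vertexhood;
\item this scaling argument also removes any reliance on $\widehat{\mathcal{T}}_j$ being nonempty, which the paper handles by restricting to the nonempty terms $\mathcal{K}^\star$.
\end{itemize}
If you keep the truncation, apply the ``lower $\eta_i$'' argument to each $v_k$ in a decomposition $w=\sum_k\lambda_k v_k$, not to a single $\widehat{\mathcal{T}}_k$, since the LP ranges over the hull.
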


\begin{proof}
Let $\mathcal{K}^\star\subseteq\mathcal{K}(x)$ index the nonempty polyhedral terms and write each as
\[
\widehat{\mathcal{T}}_{k}(x;\hat y)
=
\{\, w : \widehat A_k(x)\, w \le \widehat b_k(x;\hat y)\,\},
\qquad k\in\mathcal{K}^\star,
\]
where $w=(y,\eta)$ and $\widehat A_k,\widehat b_k$ collect all linear inequalities defining the $k$-th lifted term.

By Theorem~2.1 of~\cite{balas2018disjunctive} (disjunctive convex-hull theorem), the convex hull of a finite union of polyhedra admits the extended formulation
\begin{equation}\label{eq:balas_hull_app}
\operatorname{conv}\!\Big(\bigcup_{k\in\mathcal{K}^\star} \widehat{\mathcal{T}}_{k}(x;\hat y)\Big)
=
\Bigg\{\, w\ \Bigg|\ 
\exists \{w_k\}_{k\in\mathcal{K}^\star},\{\lambda_k\}_{k\in\mathcal{K}^\star}:
\begin{array}{l}
\widehat A_k(x)\, w_k \le \lambda_k\, \widehat b_k(x;\hat y)\quad \forall k\in\mathcal{K}^\star,\\[2pt]
w=\sum_{k\in\mathcal{K}^\star} w_k,\\[2pt]
\sum_{k\in\mathcal{K}^\star} \lambda_k = 1,\quad \lambda_k\ge 0\quad \forall k\in\mathcal{K}^\star.
\end{array}
\Bigg\}.
\end{equation}
(Our standing assumption that $y\in\mathcal{Y}$ is bounded, together with the copied epigraph/global constraints inside each term,
ensures the usual boundedness/closedness conditions under which this polyhedral hull description applies in the present setting.)

Now let $w^\star$ be an \emph{optimal extreme point} of \eqref{eq:dnf_proj_lp_app}, i.e., an optimal vertex of
$\widetilde{\mathcal{F}}_{\mathrm{DNF}}(x;\hat y)=\operatorname{conv}(\cup_{k\in\mathcal{K}^\star}\widehat{\mathcal{T}}_{k})$.
Corollary~2.2 of~\cite{balas2018disjunctive} gives the corresponding vertex characterization for the hull:
every extreme point of the convex hull of a finite union of polyhedra belongs to (indeed is a vertex of) at least one of the disjunct polyhedra.
Therefore, there exists $k^\star\in\mathcal{K}^\star$ such that
\[
w^\star \in \widehat{\mathcal{T}}_{k^\star}(x;\hat y)\subseteq \widehat{\mathcal{F}}(x;\hat y),
\]
which proves the first claim.

Finally, membership $w^\star\in \widehat{\mathcal{T}}_{k^\star}(x;\hat y)=\bigcap_{r\in\mathcal{R}(x)}\widehat{\mathcal{S}}_{r,k_r^\star}(x;\hat y)$
implies in particular that $y^\star$ satisfies the unlifted rule consequents in that same selection (dropping epigraph/global constraints),
and thus $y^\star\in \bigcap_{r\in\mathcal{R}(x)} \bigcup_{j=1}^{m_r} \mathcal{S}_{rj}(x)=\mathcal{F}(x)$.
\end{proof}

\begin{remark}[On solver outputs]
The argument above establishes existence of a rule-satisfying optimal solution. In practice, if the LP is solved with a
vertex-returning method (e.g., simplex, or an interior-point method with crossover), the returned primal solution is an extreme
point of the feasible polyhedron and therefore satisfies the original rules as above.
\end{remark}

\section{Algorithm Pseudocode}
\label{app:algorithm}

\begin{algorithm2e}
\caption{\textsc{DisjunctiveNet} forward and backward pass}
\label{alg:disjnet}

\KwIn{Input $x$, neural predictor $f_\theta$, rules $\{(\mathcal A_r,\mathcal C_r)\}_{r=1}^R$, global constraints $\mathcal G(x)$, mode $\mathsf{mode}\in\{\mathrm{CNF},\mathrm{DNF}\}$}
\KwOut{Rule-consistent prediction $y^\star$ and gradients for updating $\theta$}

Compute unconstrained prediction $\hat y \gets f_\theta(x)$\;
Determine active rules $\mathcal R(x)\gets \{r\in\{1,\ldots,R\}:x\in\mathcal A_r\}$\;

\ForEach{$r\in\mathcal R(x)$}{
    \ForEach{$j\in\{1,\ldots,m_r\}$}{
        Construct lifted disjunct
        \[
        \widehat S_{rj}(x;\hat y)
        =
        \{(y,\eta):
        A_{rj}(x)y\le b_{rj}(x),\
        (y,\eta)\in E(\hat y),\
        y\in\mathcal G(x)\}.
        \]
    }
    Define lifted disjunction
    $\widehat C_r(x;\hat y)\gets
    \bigcup_{j=1}^{m_r}\widehat S_{rj}(x;\hat y)$\;
}

\eIf{$\mathsf{mode}=\mathrm{CNF}$}{
    Convexify each active rule independently:
    \[
        \widetilde{\mathcal F}(x;\hat y)
        \gets
        \bigcap_{r\in\mathcal R(x)}
        \operatorname{conv}\!\left(\widehat C_r(x;\hat y)\right).
    \]
}{
    Enumerate disjunct selections
    $\Pi(x)\gets \prod_{r\in\mathcal R(x)}[m_r]$\;
    Form the DNF convex hull:
    \[
        \widetilde{\mathcal F}(x;\hat y)
        \gets
        \operatorname{conv}\!\left(
        \bigcup_{k\in\Pi(x)}
        \bigcap_{r\in\mathcal R(x)}
        \widehat S_{r,k_r}(x;\hat y)
        \right).
    \]
}

Write $\widetilde{\mathcal F}(x;\hat y)$ as an extended-form LP using convex-combination variables and lifted disjunct copies\;

Compute projected prediction:
\[
    (y^\star,\eta^\star)
    \in
    \arg\min_{(y,\eta)}
    \mathbf 1^\top \eta
    \quad
    \mathrm{s.t.}
    \quad
    (y,\eta)\in\widetilde{\mathcal F}(x;\hat y).
\]

Return $y^\star$ as the model prediction\;

Backpropagate by implicit differentiation through the KKT conditions of the regularized projection problem:
\[
    \frac{\partial \mathcal L}{\partial \theta}
    =
    \frac{\partial \mathcal L}{\partial y^\star}
    \frac{\partial y^\star}{\partial \hat y}
    \frac{\partial f_\theta(x)}{\partial \theta}.
\]

\end{algorithm2e}

\section{Examples of lifted projection exactness}
\label{app:projection-examples}

\subsection{A one-rule example: exact satisfaction under the lifted convex hull}
\label{app:one-rule-example}

We first illustrate the projection mechanism with a single active disjunctive rule. Let \(y\in[0,10]\), and suppose the neural network predicts
\[
    \hat y=4.5.
\]
Consider the active rule
\[
    R:\quad y\le 3\ \vee\ y\ge 7.
\]
The feasible set is the union of two polyhedra,
\[
    F=S_1\cup S_2,
    \qquad
    S_1=\{y:0\le y\le 3\},
    \qquad
    S_2=\{y:7\le y\le 10\}.
\]
The network prediction \(\hat y=4.5\) violates the rule because
\[
    \hat y\notin [0,3]\cup[7,10].
\]

The projection problem is
\[
    y^\star
    \in
    \arg\min_{y\in F}
    |y-\hat y|.
\]
Since \(\hat y=4.5\), the closest feasible point is \(y=3\), with distance \(1.5\).

To formulate this as a linear program, we introduce the epigraph variable \(\eta\), which represents the \(\ell_1\) distance to \(\hat y\). For \(\hat y=4.5\), the epigraph constraints are
\[
    \eta\ge y-4.5,
    \qquad
    \eta\ge 4.5-y,
    \qquad
    \eta\ge0.
\]
We then lift each disjunct into the \((y,\eta)\)-space:
\[
    \widehat S_1
    =
    \{(y,\eta):0\le y\le 3,\ \eta\ge y-4.5,\ \eta\ge 4.5-y,\ \eta\ge0\},
\]
\[
    \widehat S_2
    =
    \{(y,\eta):7\le y\le 10,\ \eta\ge y-4.5,\ \eta\ge 4.5-y,\ \eta\ge0\}.
\]
The lifted projection problem is
\[
    \min_{y,\eta}\ \eta
    \qquad
    \mathrm{s.t.}
    \qquad
    (y,\eta)\in\operatorname{conv}(\widehat S_1\cup \widehat S_2).
\]

Using the extended convex-hull formulation, we introduce copied variables
\[
    (y_1,\eta_1),\quad (y_2,\eta_2),
\]
and convex-combination weights
\[
    \lambda_1,\lambda_2\ge0,
    \qquad
    \lambda_1+\lambda_2=1.
\]
The lifted convex-hull LP is
\[
\begin{aligned}
\min \quad & \eta\\
\mathrm{s.t.}\quad
    & y=y_1+y_2,\qquad \eta=\eta_1+\eta_2,\\
    & \lambda_1+\lambda_2=1,\qquad \lambda_1,\lambda_2\ge0,\\
    & 0\le y_1\le 3\lambda_1,\\
    & 7\lambda_2\le y_2\le 10\lambda_2,\\
    & \eta_1\ge y_1-4.5\lambda_1,\qquad
      \eta_1\ge 4.5\lambda_1-y_1,\\
    & \eta_2\ge y_2-4.5\lambda_2,\qquad
      \eta_2\ge 4.5\lambda_2-y_2,\\
    & \eta_1,\eta_2\ge0.
\end{aligned}
\]
The optimal extreme-point solution is
\[
    \lambda_1=1,\quad \lambda_2=0,\quad
    y_1=3,\quad y_2=0,\quad
    \eta_1=1.5,\quad \eta_2=0.
\]
Therefore,
\[
    y^\star=y_1+y_2=3,
    \qquad
    \eta^\star=\eta_1+\eta_2=1.5.
\]
The returned output satisfies the original disjunctive rule exactly:
\[
    y^\star=3\in S_1\subseteq S_1\cup S_2=F.
\]

This example shows why the epigraph variable must be included inside the lifted disjuncts before convexification. A naive convexification in the original \(y\)-space would replace \(F=[0,3]\cup[7,10]\) by \([0,10]\), which would incorrectly allow \(\hat y=4.5\). In contrast, convexifying the lifted sets \(\widehat S_1\) and \(\widehat S_2\) preserves the projection objective, and an optimal extreme point of the lifted hull corresponds to one original feasible disjunct.

\subsection{A two-rule example: exact satisfaction under the DNF lifted hull}
\label{app:two-rule-example}

We now illustrate the same mechanism with two simultaneously active rules. Let \(y\in[0,10]\), and suppose the neural network predicts
\[
    \hat y=4.5.
\]
Consider two active rules:
\[
    R_1:\quad y\le 3\ \vee\ y\ge 6,
    \qquad
    R_2:\quad y\le 4\ \vee\ y\ge 7.
\]
Each rule is a disjunction of two polyhedra:
\[
    C_1=S_{11}\cup S_{12},
    \qquad
    S_{11}=\{y:0\le y\le 3\},\quad
    S_{12}=\{y:6\le y\le 10\},
\]
\[
    C_2=S_{21}\cup S_{22},
    \qquad
    S_{21}=\{y:0\le y\le 4\},\quad
    S_{22}=\{y:7\le y\le 10\}.
\]
The original feasible set is
\[
    F=C_1\cap C_2.
\]
The DNF expands the intersection of disjunctions into intersections of disjuncts:
\[
\begin{aligned}
F
&=
(S_{11}\cup S_{12})\cap(S_{21}\cup S_{22})\\
&=
(S_{11}\cap S_{21})
\cup
(S_{11}\cap S_{22})
\cup
(S_{12}\cap S_{21})
\cup
(S_{12}\cap S_{22}).
\end{aligned}
\]
Here,
\[
    S_{11}\cap S_{21}=[0,3],
    \qquad
    S_{11}\cap S_{22}=\emptyset,
\]
\[
    S_{12}\cap S_{21}=\emptyset,
    \qquad
    S_{12}\cap S_{22}=[7,10].
\]
Thus the true feasible set is
\[
    F=[0,3]\cup[7,10].
\]

The network prediction \(\hat y=4.5\) violates both rules. To project it, we introduce the epigraph variable \(\eta\) and lift each nonempty DNF term:
\[
    \widehat T_1
    =
    \{(y,\eta):0\le y\le 3,\ \eta\ge y-4.5,\ \eta\ge 4.5-y,\ \eta\ge0\},
\]
\[
    \widehat T_2
    =
    \{(y,\eta):7\le y\le 10,\ \eta\ge y-4.5,\ \eta\ge 4.5-y,\ \eta\ge0\}.
\]
The DNF projection solves
\[
    \min_{y,\eta}\ \eta
    \qquad
    \mathrm{s.t.}
    \qquad
    (y,\eta)\in\operatorname{conv}(\widehat T_1\cup \widehat T_2).
\]

Using copied variables and convex-combination weights, this is represented as
\[
\begin{aligned}
\min \quad & \eta\\
\mathrm{s.t.}\quad
    & y=y_1+y_2,\qquad \eta=\eta_1+\eta_2,\\
    & \lambda_1+\lambda_2=1,\qquad \lambda_1,\lambda_2\ge0,\\
    & 0\le y_1\le 3\lambda_1,\\
    & 7\lambda_2\le y_2\le 10\lambda_2,\\
    & \eta_1\ge y_1-4.5\lambda_1,\qquad
      \eta_1\ge 4.5\lambda_1-y_1,\\
    & \eta_2\ge y_2-4.5\lambda_2,\qquad
      \eta_2\ge 4.5\lambda_2-y_2,\\
    & \eta_1,\eta_2\ge0.
\end{aligned}
\]
The closest feasible candidate is \(y=3\), at distance \(1.5\) from \(\hat y=4.5\). Therefore, the LP has the optimal extreme-point solution
\[
    \lambda_1=1,\quad \lambda_2=0,\quad
    y_1=3,\quad y_2=0,\quad
    \eta_1=1.5,\quad \eta_2=0,
\]
which returns
\[
    y^\star=3,
    \qquad
    \eta^\star=1.5.
\]
This point satisfies both original rules:
\[
    y^\star=3\le3
    \quad\Rightarrow\quad
    y^\star\in C_1,
\]
and
\[
    y^\star=3\le4
    \quad\Rightarrow\quad
    y^\star\in C_2.
\]
Hence,
\[
    y^\star\in C_1\cap C_2=F.
\]

This example highlights why DNF gives exact satisfaction for multiple active rules. The DNF first enumerates consistent disjunct selections across all active rules and then takes the lifted convex hull of those feasible intersections. Empty intersections are automatically irrelevant. Because the LP optimum can be chosen as an extreme point of this lifted hull, the returned solution belongs to one original DNF term and therefore satisfies every active rule.

\section{Additional Details and Results}

\subsection{Compute environment}
\label{app:compute}
All experiments were run on a shared server environment using a single CPU process per run. 

\paragraph{Software and differentiation through optimization.}
All models were implemented in \texttt{Julia} using \texttt{Flux.jl} for neural networks and \texttt{DiffOpt.jl} for differentiating through the projection layer (a parametric linear program). We used the \texttt{ADAMW} optimizer in Flux for training and fine-tuning. Unless otherwise stated, we did not perform hyper-parameter tuning; we used a single configuration across all methods to ensure controlled comparisons.

\subsection{Synthetic cooling-control problem details.}
\label{app:syn_details}
We consider a simplified cooling-control system with input vector
\[
x = (\mathrm{T_a}, H, w, \pi, \mathrm{dr}, \mathrm{mf}, \mathrm{mc}),
\]
where $\mathrm{T_a}$ denotes ambient temperature, $H$ humidity, $w$ workload, $\pi$ electricity price, and $\mathrm{dr}$, $\mathrm{mf}$, and $\mathrm{mc}$ are binary indicators for demand-response events, fan maintenance, and chiller maintenance, respectively.
The model predicts continuous control outputs
\[
y = (f, c, p) \in [0,1]^3,
\]
corresponding to fan speed, chiller level, and pump power.

System dynamics are captured through simple proxy models.
The required cooling load is
\[
L(x) = L_0 + L_w w + L_T \max(0, \mathrm{T_a} - 20),
\]
and feasibility requires $a_f f + a_c c \ge L(x)$.
Power consumption is modeled as
\[
P(y) = \alpha_f f + \alpha_c c + \alpha_p p,
\]
with an always-on constraint $p \ge p_{\min}$.
Given reference setpoints $f_{\mathrm{ref}}(w)$ and $c_{\mathrm{ref}}(\mathrm{T_a}, w)$, the oracle solution minimizes a weighted objective combining power cost and deviation from these references.

\paragraph{Rule set.}
Domain knowledge is encoded via seven input-dependent logical rules, each activated by a predicate on the context $x$ and enforcing a disjunction of linear constraints on the controls $y=(f,c,p)$.
We summarize the rules below; the numerical thresholds come from the fixed parameter configuration used in our experiments (see the accompanying code for the exact values).

\begin{itemize}
    \item \textbf{R1 (hot ambient).}
    \emph{If} $\mathrm{T_a}$ exceeds a hot-temperature threshold, \emph{then}
    \[
    c \ge 0.45 \ \ \ \lor\ \ \ f \ge 0.75.
    \]
    This captures redundancy under hot conditions: either active cooling (chiller) or high airflow (fan) must increase.

    \item \textbf{R2 (cold ambient).}
    \emph{If} $\mathrm{T_a}$ is below a cold-temperature threshold, \emph{then}
    \[
    c \le 0.05 \ \ \ \lor\ \ \ f \ge 0.60.
    \]
    This encodes an efficiency heuristic: under cold ambient conditions, heavy chilling is discouraged unless high airflow is needed.

    \item \textbf{R3 (high humidity).}
    \emph{If} humidity $H$ exceeds a humid threshold, \emph{then}
    \[
    f \le 0.70 \ \ \ \lor\ \ \ c \ge 0.30.
    \]
    This models a dehumidification trade-off: high airflow is limited unless sufficient cooling is provided.

    \item \textbf{R4 (high workload).}
    \emph{If} workload $w$ exceeds a high-load threshold, \emph{then}
    \[
    f \ge 0.65 \ \ \ \lor\ \ \ c \ge 0.35.
    \]
    This ensures the controller increases capacity via either actuator under heavy demand.

    \item \textbf{R5 (demand response).}
    \emph{If} a demand-response event occurs ($\mathrm{dr}=1$), \emph{then}
    \[
    P(y)\le P_{\mathrm{dr}}
    \ \ \ \lor\ \ \
    (f \le 0.50 \ \wedge\  c \le 0.40),
    \]
    where $P(y)=\alpha_f f+\alpha_c c+\alpha_p p$ is the proxy power model.
    This captures grid constraints: either respect an overall power cap or jointly throttle the main consumers.

    \item \textbf{R6 (fan maintenance).}
    \emph{If} the fan is under maintenance ($\mathrm{mf}=1$), \emph{then}
    \[
    f \le 0.40 \ \ \ \lor\ \ \ c \ge 0.40.
    \]
    This models reduced fan availability: either limit fan usage or compensate via chiller operation.

    \item \textbf{R7 (chiller maintenance).}
    \emph{If} the chiller is under maintenance ($\mathrm{mc}=1$), \emph{then}
    \[
    c \le 0.40 \ \ \ \lor\ \ \ f \ge 0.70.
    \]
    This is symmetric to R6 and models reduced chiller availability: either limit chiller usage or compensate with higher airflow.
\end{itemize}

\paragraph{IID vs.\ OOD evaluation.}
We evaluate on two test distributions defined by distinct distributions.
Each sample consists of continuous environment/operating variables $(T_a, H, w, \pi)$ and binary event indicators $(\mathrm{dr},\mathrm{mf},\mathrm{mc})$.
Unless stated otherwise, all draws are independent.

\smallskip
\noindent\textbf{In-distribution (IID) test set.}
The IID test set is drawn from the same distribution as training.
Ambient temperature $T_a$ is sampled uniformly from $[0,40]$ (in ${}^\circ$C), relative humidity $H$ is sampled uniformly from $[10,95]$ (in \%), workload $w$ is sampled uniformly from $[0,1]$, and the price proxy $\pi$ is sampled uniformly from $[0.05,0.50]$.
The event indicators are sampled independently as Bernoulli random variables with
\[
\mathrm{dr}\sim \mathrm{Bernoulli}(0.10),\qquad
\mathrm{mf}\sim \mathrm{Bernoulli}(0.05),\qquad
\mathrm{mc}\sim \mathrm{Bernoulli}(0.05).
\]
We generate $n_{\text{test,iid}}=2000$ IID test points using seed $\texttt{seed\_test\_iid}=2$.

\smallskip
\noindent\textbf{Out-of-distribution (OOD) test set.}
The OOD test set is drawn from a shifted distribution intended to increase the frequency of extreme operating conditions and events.
Specifically, we shift the continuous variables toward hotter and more humid conditions, higher workload, and higher prices:
\[
T_a \sim \mathrm{Unif}(18,45),\quad
H   \sim \mathrm{Unif}(50,100),\quad
w   \sim \mathrm{Unif}(0.4,1.0),\quad
\pi  \sim \mathrm{Unif}(0.20,0.80).
\]
We also increase event frequencies:
\[
\mathrm{dr}\sim \mathrm{Bernoulli}(0.35),\qquad
\mathrm{mf}\sim \mathrm{Bernoulli}(0.08),\qquad
\mathrm{mc}\sim \mathrm{Bernoulli}(0.08).
\]
We generate $n_{\text{test,ood}}=2000$ OOD test points using seed $\texttt{seed\_test\_ood}=3$.

\smallskip

\paragraph{Neural architecture.}
For all learning-based methods in the cooling task, we used a shared MLP backbone with input dimension $7$, output dimension $3$, and two hidden layers of width $8$ with ReLU activations:
\[
7 \rightarrow 8 \rightarrow 8 \rightarrow 3.
\]
This backbone was used for the base model, penalty model, and all projection variants (CNF, and DNF).

\paragraph{Training setup.}
The base and penalty models were trained for $5$ epochs with batch size $256$ and learning rate $10^{-3}$ (weight decay $0$). Projection-based models were trained with batch size $1$ and learning rate $10^{-3}$, using a short training schedule of $1$ epoch in this case study (to emphasize the computational overhead and stability properties of the projection layer during optimization). Randomization was controlled through a fixed seed ($0$), and data shuffling was enabled.

\paragraph{Penalty and projection configurations.}
The penalty baseline used penalty weight $\lambda_{\text{pen}} = 2.0$ and soft-min temperature $\tau_{\text{softmin}} = 10^{-2}$. Projection variants used the same backbone and differed only in the projection operator: CNF projection, DNF projection, and their relaxed counterparts.

\subsection{Computational cost and LP size}
\label{app:compute-cost}

We report computational statistics for the synthetic cooling-control task. The projection-based models solve a linear program for each sample, whereas the base and penalty models require only a standard neural-network forward pass. Table~\ref{tab:inf-time} reports per-sample inference time. The base and penalty models have negligible overhead, while the CNF and DNF projection layers require tens of milliseconds per sample. The DNF formulation is slightly slower because it constructs a tighter relaxation by expanding intersections of active disjuncts.

\begin{table}[h]
\centering
\caption{Per-sample inference time on the synthetic cooling task.}
\label{tab:inf-time}
\begin{tabular}{lc}
\toprule
Model & Inference time \\
\midrule
Base & $<1\,\mu\mathrm{s}$ \\
Penalty & $<1\,\mu\mathrm{s}$ \\
CNF & $25.03\,\mathrm{ms}$ \\
DNF & $28.62\,\mathrm{ms}$ \\
\bottomrule
\end{tabular}
\end{table}

Table~\ref{tab:lp-size-active1} reports LP sizes for samples with at least one active rule, while Table~\ref{tab:lp-size-active2} reports the corresponding statistics for samples with at least two active rules. As expected, DNF produces larger LPs than CNF on average. However, the realized DNF size remains moderate in this benchmark because not all samples activate many rules, and many disjunct intersections are empty or infeasible.

\begin{table}[h]
\centering
\caption{LP sizes for synthetic cooling samples with at least one active rule.}
\label{tab:lp-size-active1}
\begin{tabular}{lcccccc}
\toprule
Formulation & Vars mean $\pm$ std & Vars min & Vars max &
Cons mean $\pm$ std & Cons min & Cons max \\
\midrule
CNF & $27.78 \pm 9.75$ & 20 & 62 & $98.28 \pm 39.11$ & 67 & 235 \\
DNF & $32.51 \pm 14.53$ & 23 & 121 & $111.94 \pm 60.27$ & 73 & 485 \\
\bottomrule
\end{tabular}
\end{table}

\begin{table}[h]
\centering
\caption{LP sizes for synthetic cooling samples with at least two active rules.}
\label{tab:lp-size-active2}
\begin{tabular}{lcccccc}
\toprule
Formulation & Vars mean $\pm$ std & Vars min & Vars max &
Cons mean $\pm$ std & Cons min & Cons max \\
\midrule
CNF & $37.50 \pm 6.54$ & 34 & 62 & $137.28 \pm 26.29$ & 123 & 235 \\
DNF & $44.39 \pm 14.86$ & 37 & 121 & $160.53 \pm 62.69$ & 129 & 485 \\
\bottomrule
\end{tabular}
\end{table}

\subsection{Scalability with the number of active rules}
\label{app:scalability}

The empirical cooling dataset contains at most five active rules per sample. To evaluate scalability beyond the empirical regime, we additionally construct synthetic stress-test LP instances with $k=6,\ldots,10$ active rules. These rows are not sampled from the cooling dataset; rather, they are designed to quantify how CNF and DNF scale as the number of candidate disjunctive combinations increases. The DNF size can grow exponentially in $k$ in the worst case, since the number of disjunct combinations grows as $2^k$ in this experiment. In practice, however, the realized DNF size can be substantially smaller than this worst-case bound because many disjunct intersections are empty or infeasible.

\begin{table}[h]
\centering
\caption{LP size and inference-time scaling with the number of active rules $k$. Counts denote the number of cooling-dataset samples that activate exactly $k$ rules; rows with ``--'' are synthetic stress-test instances rather than empirical samples.}
\label{tab:k-scaling}
\begin{tabular}{clcccc}
\toprule
$k$ & Formulation & Variables & Constraints & Inference time (s) & Count \\
\midrule
0  & CNF & 9    & 20     & 0.006599 & 57 \\
0  & DNF & 9    & 20     & 0.009122 & 57 \\
1  & CNF & 20   & 67     & 0.008361 & 156 \\
1  & DNF & 23   & 73.1   & 0.007077 & 156 \\
2  & CNF & 34   & 123.2  & 0.008230 & 139 \\
2  & DNF & 37   & 129.5  & 0.007697 & 139 \\
3  & CNF & 48   & 179.3  & 0.008023 & 49 \\
3  & DNF & 65   & 246.5  & 0.008667 & 49 \\
4  & CNF & 62   & 235.7  & 0.009787 & 20 \\
4  & DNF & 121  & 491.0  & 0.010559 & 20 \\
5  & CNF & 76   & 292.0  & 0.008193 & 1 \\
5  & DNF & 233  & 997.0  & 0.013100 & 1 \\
6  & CNF & 87   & 207.0  & 0.005307 & -- \\
6  & DNF & 454  & 1321.0 & 0.019194 & -- \\
7  & CNF & 101  & 241.0  & 0.005601 & -- \\
7  & DNF & 902  & 2761.0 & 0.041093 & -- \\
8  & CNF & 115  & 275.0  & 0.006087 & -- \\
8  & DNF & 1798 & 5769.0 & 0.107464 & -- \\
9  & CNF & 129  & 309.0  & 0.007065 & -- \\
9  & DNF & 3590 & 12041.0 & 0.267479 & -- \\
10 & CNF & 143  & 343.0  & 0.007796 & -- \\
10 & DNF & 7174 & 25097.0 & 0.920017 & -- \\
\bottomrule
\end{tabular}
\end{table}

Even under this pessimistic stress test, DNF inference remains below one second per sample at $k=10$, with substantially smaller times for all lower values of $k$. The CNF formulation scales approximately linearly in this setup and remains below $10$ ms across all tested values. These results support the practical use of the proposed layer in settings where the number of simultaneously active rules is small to moderate. This regime is also representative of existing neuro-symbolic benchmarks, which typically involve fixed and relatively small rule families.

\begin{table}[h]
\centering
\caption{Rule counts in representative neuro-symbolic benchmarks. Existing benchmarks typically involve small fixed rule families, comparable to the number of active rules in our experimental settings.}
\label{tab:benchmark-rule-counts}
\begin{tabular}{lc}
\toprule
Benchmark & Rule count \\
\midrule
MNMath, arithmetic rules & 2 \\
Kand-Logic & 1 \\
CLE4EVR, existential rule & 1 \\
BDD-OIA / SDD-OIA & 1 \\
DeepSeaProbLog, year detection & 5 \\
SATNet / Sudoku / visual Sudoku & 3 \\
\bottomrule
\end{tabular}
\end{table}

The purpose of our two benchmark tasks is not only to match the scale of existing neuro-symbolic datasets, but also to evaluate input-dependent mixed-integer constraints with multiple disjunctive rules and multiple linear constraints per disjunct. These features are not represented in the above benchmarks, but arise naturally in scientific and engineering applications.

\paragraph{Training-time overhead.}
Projection models are more expensive to train than the base and penalty baselines because each forward pass includes an LP solve and each backward pass differentiates through the corresponding optimization layer. In our implementation, the forward pass uses the extended-form LP, while the backward pass uses gradients obtained from a regularized strongly convex approximation, following the differentiable-optimization setup described in Sec.~3.2. The overhead is therefore dominated by the size of the LP relaxation and the number of active rules per sample. This is why the partial-DNF formulation provides a useful trade-off: it can tighten the relaxation by applying basic steps to selected active rules, while avoiding the full combinatorial expansion of DNF.

\subsection{Additional scRNA-seq results}
\label{app:pbmc_tables}

\paragraph{Neural architecture.}
The base classifier is a two-layer MLP with one hidden layer of width $h=8$:
\[
\texttt{Dense}(n_{\text{in}} \rightarrow 8) \;\rightarrow\; \texttt{Dense}(8 \rightarrow n_{\text{classes}}),
\]
initialized with Glorot uniform initialization. For PBMC3k, we have $n_{\text{in}}=1838$ gene-expression features and $n_{\text{classes}}=8$ cell types. The same backbone is used across the baselines and projection variants.

\paragraph{Data splits and training protocol.}
The PBMC3k processed dataset contains $2604$ labeled cells, each represented by $1838$ gene features and one of $8$ classes. We trained with a fixed test fraction of $0.1$ and a fixed split seed ($42$) to keep the test set consistent across methods and dataset-size sweeps. We evaluated dataset-size scaling over training fractions
\[
\{0.005, 0.01, 0.05, 0.1, 0.2\},
\]
and repeated each experiment for $3$ random seeds. Training used batch size $1$ and learning rate $3\times 10^{-3}$.

\paragraph{Two-stage training for projection models.}
Projection-based models (CNF/DNF and their relaxations) were \emph{fine-tuned} from a base model. Concretely, we first trained the base model for $500$ epochs. Then, starting from the base model weights, we fine-tuned each projection model for $15$ epochs under its corresponding projection layer and loss. This protocol isolates the effect of the projection layer during training, while keeping the backbone initialization identical across projection variants.

\paragraph{Baselines.}
We compared against (i) the unprojected base model and (ii) additional constraint-handling baselines (e.g., penalty and finetuned penalty) implemented within the same codebase and trained under the same data splits and evaluation pipeline.

\paragraph{PBMC test metrics across training set sizes.}
We report mean $\pm$ std across runs for accuracy, macro-F1, macro precision, macro recall, and constraint satisfaction.

% -------------------- dataset size = 12 --------------------
\begin{table}[h]
\centering
\small
\begin{tabular}{lccccc}
\toprule
Model & Acc & Macro F1 & Macro Prec & Macro Rec & CSAT \\
\midrule
Base                         & $0.251 \pm 0.194$ & $0.232 \pm 0.080$ & $0.381 \pm 0.072$ & $0.339 \pm 0.093$ & $0.105 \pm 0.088$ \\
Penalty            & $0.468 \pm 0.037$ & $0.339 \pm 0.025$ & $0.385 \pm 0.046$ & $0.466 \pm 0.054$ & $0.046 \pm 0.025$ \\
Finetuned Penalty   & $0.254 \pm 0.194$ & $0.224 \pm 0.098$ & $0.389 \pm 0.114$ & $0.328 \pm 0.117$ & $0.180 \pm 0.108$ \\
CNF                & $0.329 \pm 0.175$ & $0.324 \pm 0.083$ & $0.459 \pm 0.111$ & $0.460 \pm 0.091$ & $0.709 \pm 0.065$ \\
DNF                & $0.342 \pm 0.197$ & $0.326 \pm 0.096$ & $0.450 \pm 0.091$ & $0.462 \pm 0.100$ & $0.908 \pm 0.083$ \\
Rules               & $0.180 \pm 0.020$ & $0.162 \pm 0.021$ & $0.181 \pm 0.012$ & $0.315 \pm 0.100$ & $0.980 \pm 0.000$ \\
\bottomrule
\end{tabular}
\caption{PBMC test metrics for dataset size $n=12$ (mean $\pm$ std over runs). CSAT denotes sample-level $\tau$ satisfaction.}
\label{tab:pbmc_test_n12}
\end{table}

% -------------------- dataset size = 23 --------------------
\begin{table}[h]
\centering
\small
\begin{tabular}{lccccc}
\toprule
Model & Acc & Macro F1 & Macro Prec & Macro Rec & CSAT \\
\midrule
Base                         & $0.470 \pm 0.163$ & $0.296 \pm 0.109$ & $0.381 \pm 0.100$ & $0.392 \pm 0.135$ & $0.242 \pm 0.258$ \\
Penalty            & $0.431 \pm 0.109$ & $0.319 \pm 0.079$ & $0.361 \pm 0.034$ & $0.411 \pm 0.087$ & $0.052 \pm 0.020$ \\
Finetuned Penalty   & $0.414 \pm 0.203$ & $0.293 \pm 0.066$ & $0.465 \pm 0.067$ & $0.358 \pm 0.083$ & $0.317 \pm 0.241$ \\
CNF                & $0.503 \pm 0.158$ & $0.407 \pm 0.077$ & $0.496 \pm 0.119$ & $0.539 \pm 0.059$ & $0.804 \pm 0.125$ \\
DNF                & $0.501 \pm 0.150$ & $0.402 \pm 0.073$ & $0.488 \pm 0.116$ & $0.534 \pm 0.057$ & $0.951 \pm 0.026$ \\
Rules               & $0.167 \pm 0.006$ & $0.158 \pm 0.007$ & $0.161 \pm 0.011$ & $0.343 \pm 0.025$ & $0.980 \pm 0.000$ \\
\bottomrule
\end{tabular}
\caption{PBMC test metrics for dataset size $n=23$ (mean $\pm$ std over runs). CSAT denotes sample-level $\tau$ satisfaction.}
\label{tab:pbmc_test_n23}
\end{table}

% -------------------- dataset size = 117 --------------------
\begin{table}[h]
\centering
\small
\begin{tabular}{lccccc}
\toprule
Model & Acc & Macro F1 & Macro Prec & Macro Rec & CSAT \\
\midrule
Base                         & $0.713 \pm 0.041$ & $0.559 \pm 0.018$ & $0.582 \pm 0.030$ & $0.650 \pm 0.033$ & $0.487 \pm 0.074$ \\
Penalty            & $0.753 \pm 0.033$ & $0.567 \pm 0.064$ & $0.600 \pm 0.115$ & $0.632 \pm 0.029$ & $0.464 \pm 0.032$ \\
Finetuned Penalty   & $0.673 \pm 0.039$ & $0.538 \pm 0.128$ & $0.574 \pm 0.146$ & $0.599 \pm 0.092$ & $0.611 \pm 0.122$ \\
CNF                & $0.726 \pm 0.054$ & $0.588 \pm 0.050$ & $0.614 \pm 0.029$ & $0.703 \pm 0.046$ & $0.846 \pm 0.044$ \\
DNF                & $0.723 \pm 0.056$ & $0.584 \pm 0.050$ & $0.610 \pm 0.032$ & $0.699 \pm 0.047$ & $0.948 \pm 0.006$ \\
Rules               & $0.193 \pm 0.041$ & $0.175 \pm 0.043$ & $0.196 \pm 0.039$ & $0.362 \pm 0.079$ & $0.980 \pm 0.000$ \\
\bottomrule
\end{tabular}
\caption{PBMC test metrics for dataset size $n=117$ (mean $\pm$ std over runs). CSAT denotes sample-level $\tau$ satisfaction.}
\label{tab:pbmc_test_n117}
\end{table}

% -------------------- dataset size = 234 --------------------
\begin{table}[h]
\centering
\small
\begin{tabular}{lccccc}
\toprule
Model & Acc & Macro F1 & Macro Prec & Macro Rec & CSAT \\
\midrule
Base                         & $0.736 \pm 0.038$ & $0.606 \pm 0.027$ & $0.664 \pm 0.080$ & $0.646 \pm 0.009$ & $0.520 \pm 0.094$ \\
Penalty            & $0.753 \pm 0.058$ & $0.528 \pm 0.070$ & $0.553 \pm 0.088$ & $0.556 \pm 0.095$ & $0.575 \pm 0.040$ \\
Finetuned Penalty   & $0.686 \pm 0.008$ & $0.551 \pm 0.068$ & $0.617 \pm 0.129$ & $0.591 \pm 0.078$ & $0.654 \pm 0.093$ \\
CNF                & $0.691 \pm 0.073$ & $0.566 \pm 0.070$ & $0.607 \pm 0.092$ & $0.658 \pm 0.069$ & $0.863 \pm 0.010$ \\
DNF                & $0.691 \pm 0.077$ & $0.549 \pm 0.079$ & $0.590 \pm 0.098$ & $0.645 \pm 0.087$ & $0.918 \pm 0.030$ \\
Rules               & $0.193 \pm 0.028$ & $0.167 \pm 0.036$ & $0.184 \pm 0.027$ & $0.311 \pm 0.097$ & $0.980 \pm 0.000$ \\
\bottomrule
\end{tabular}
\caption{PBMC test metrics for dataset size $n=234$ (mean $\pm$ std over runs). CSAT denotes sample-level $\tau$ satisfaction.}
\label{tab:pbmc_test_n234}
\end{table}

% -------------------- dataset size = 469 --------------------
\begin{table}[h]
\centering
\small
\begin{tabular}{lccccc}
\toprule
Model & Acc & Macro F1 & Macro Prec & Macro Rec & CSAT \\
\midrule
Base                         & $0.835 \pm 0.071$ & $0.709 \pm 0.075$ & $0.773 \pm 0.102$ & $0.710 \pm 0.066$ & $0.742 \pm 0.011$ \\
Penalty            & $0.829 \pm 0.035$ & $0.675 \pm 0.164$ & $0.720 \pm 0.170$ & $0.680 \pm 0.175$ & $0.595 \pm 0.102$ \\
Finetuned Penalty   & $0.776 \pm 0.068$ & $0.648 \pm 0.094$ & $0.662 \pm 0.109$ & $0.709 \pm 0.050$ & $0.778 \pm 0.067$ \\
CNF                & $0.804 \pm 0.055$ & $0.643 \pm 0.080$ & $0.646 \pm 0.084$ & $0.736 \pm 0.055$ & $0.856 \pm 0.006$ \\
DNF                & $0.813 \pm 0.056$ & $0.666 \pm 0.081$ & $0.666 \pm 0.094$ & $0.756 \pm 0.055$ & $0.882 \pm 0.020$ \\
Rules               & $0.189 \pm 0.021$ & $0.167 \pm 0.014$ & $0.181 \pm 0.022$ & $0.337 \pm 0.031$ & $0.980 \pm 0.000$ \\
\bottomrule
\end{tabular}
\caption{PBMC test metrics for dataset size $n=469$ (mean $\pm$ std over runs). CSAT denotes sample-level $\tau$ satisfaction.}
\label{tab:pbmc_test_n469}
\end{table}

\end{document}